\documentclass{article}

\usepackage{microtype}
\usepackage{graphicx}
\usepackage{subcaption}
\usepackage{booktabs} 

\usepackage{hyperref}

\usepackage[preprint]{icml2026}

\usepackage{amsmath}
\usepackage{amssymb}
\usepackage{mathtools}
\usepackage{amsthm}
\usepackage{graphicx}
\usepackage[utf8]{inputenc}
\usepackage[capitalize,noabbrev]{cleveref}
\usepackage{enumitem}
\usepackage{thm-restate}
\usepackage{pgfplots}
\pgfplotsset{compat=1.18}

\theoremstyle{plain}
\newtheorem{theorem}{Theorem}[section]

\newtheorem{lemma}[theorem]{Lemma}

\theoremstyle{definition}
\newtheorem{definition}[theorem]{Definition}
\newtheorem{assumption}[theorem]{Assumption}
\theoremstyle{remark}
\newtheorem{remark}[theorem]{Remark}

\newcommand{\statesp}{\mathcal{Y}}             
\newcommand{\simplex}{\Delta(\statesp)}        
\newcommand{\ctxsp}{\mathcal{X}}               
\newcommand{\inputctx}{x}                      
\newcommand{\state}{y}                         
\newcommand{\dist}{\mathcal{D}}                
\newcommand{\E}{\mathbb{E}}                    

\newcommand{\model}{\pi}                       
\newcommand{\modelStrong}{\pi_{\mathrm{t}}}  
\newcommand{\modelWeak}{\pi_{\mathrm{h}}}    
\newcommand{\modelPred}{\pi_{\mathrm{t|h}}}   

\newcommand{\distsp}{\mathcal{P}}

\newcommand{\pStrong}{\mathbf{p}_{\mathrm{t}}}        
\newcommand{\pWeak}{\mathbf{p}_{\mathrm{h}}}          
\newcommand{\pPred}{\mathbf{p}_{\mathrm{t|h}}}         
\newcommand{\pNew}{\mathbf{p}^*}         

\newcommand{\pStrongrv}{\mathbf{P}_{\mathrm{t}}}      
\newcommand{\pWeakrv}{\mathbf{P}_{\mathrm{h}}}        
\newcommand{\pPredrv}{\mathbf{P}_{\mathrm{t|h}}}   
\newcommand{\pNewrv}{\mathbf{P}^*}  

\newcommand{\pvec}{\mathbf{p}}
\newcommand{\prvvec}{\mathbf{P}}

\newcommand{\staterv}{Y}
\newcommand{\statervRaw}{\staterv_{\mathrm{pre}}} 
 
\newcommand{\stateRaw}{\state_{\mathrm{pre}}} 
\newcommand{\stateSafe}{\state_{\mathrm{align}}} 

\newcommand{\statevec}{\mathbf{y}}      
\newcommand{\statervvec}{\mathbf{Y}}                       
\newcommand{\statervRawvec}{\statervvec_{\mathrm{pre}}}

\DeclareMathOperator*{\argmin}{argmin}
\newcommand{\score}{\ell}                         
\newcommand{\breg}{\mathrm{D}_{G}}

\newcommand{\kl}{\mathrm{KL}}

\newcommand{\gradG}{\nabla G}
\newcommand{\gradGinv}{(\nabla G)^{-1}}
\newcommand{\proj}{\Pi_{\Delta}^G}
\usepackage[textsize=tiny]{todonotes}

\icmltitlerunning{Jailbreaking LLMs via Calibration}

\begin{document}

\usetikzlibrary{arrows.meta, calc, positioning, shapes.geometric, decorations.pathreplacing}

\twocolumn[
  \icmltitle{Jailbreaking LLMs via Calibration}

  \icmlsetsymbol{equal}{*}

  \begin{icmlauthorlist}
    \icmlauthor{Yuxuan Lu}{pek}
    \icmlauthor{Yongkang Guo}{pek}
    \icmlauthor{Yuqing Kong}{pek}
  \end{icmlauthorlist}

  \icmlaffiliation{pek}{Department of Computer Science, Peking University, Beijing, China}

  \icmlcorrespondingauthor{Yuqing Kong}{yuqing.kong@pku.edu.cn}

  \icmlkeywords{LLM, Jailbreaking, Calibration, ICML}

  \vskip 0.3in
]



\printAffiliationsAndNotice{}  

\begin{abstract}
Safety alignment in Large Language Models (LLMs) often creates a systematic discrepancy between a model's aligned output and the underlying pre-aligned data distribution. We propose a framework in which the effect of safety alignment on next-token prediction is modeled as a systematic distortion of a pre-alignment distribution. We cast Weak-to-Strong Jailbreaking as a forecast aggregation problem and derive an optimal aggregation strategy characterized by a Gradient Shift in the loss-induced dual space. We show that logit-arithmetic jailbreaking methods are a special case of this framework under cross-entropy loss, and derive a broader family of aggregation rules corresponding to other proper losses. We also propose a new hybrid aggregation rule. Evaluations across red-teaming benchmarks and math utility tasks using frontier models demonstrate that our approach achieves superior Attack Success Rates and lower ``Jailbreak Tax'' compared with existing methods, especially on the safety-hardened \textsf{gpt-oss-120b}.

\end{abstract}

\section{Introduction}

While Large Language Models (LLMs) excel at complex reasoning and knowledge retrieval, their capabilities present a significant dual-use risk. Without constraints, these models can be manipulated to generate harmful content, ranging from hate speech and misinformation to weapon-construction instructions. Safety alignment techniques have been employed to suppress the probability of harmful token generation \citep{Perez2022RedTL, touvron2023llama2openfoundation}; however, adversarial actors continue to develop jailbreaking methods that circumvent these safeguards.

Early jailbreaking research focused on discrete adversarial prompts or manually discovered ``magic strings''~\cite{kang2024exploiting}. More recent work has shifted toward automated prompt discovery~\cite{chao2025jailbreaking} and, crucially, model-steering techniques. Among these, Weak-to-Strong Jailbreaking has emerged as an effective and computationally efficient paradigm \citep{zhao2025weak}. This approach combines the output distributions of a small unaligned model, a small aligned model, and a large aligned model, typically through simple arithmetic in logit space, to bypass safety constraints of the large model without retraining.

While this strategy is empirically successful, its theoretical basis remains unclear. Why should a linear operation in logit space reliably counteract a complex, non-linear alignment process? Are there other strategies?

In this work, we adopt a modeling abstraction in which the discrepancy between an aligned model's output distribution and a reference pre-alignment distribution is treated as a form of statistical miscalibration. We emphasize that this is a modeling choice rather than a claim about the nature of safety alignment itself. Within this abstraction, the output of an unaligned model serves as a proxy of a calibrated prediction for the pre-alignment distribution. This perspective allows us to reformulate Weak-to-Strong Jailbreaking as a forecast aggregation problem, in which auxiliary model predictions are combined to obtain a new model that estimates the pre-alignment distribution well.

In detail, we treat LLMs as probabilistic predictors over a token space. In this framework, an unaligned model estimates the pre-alignment distribution, whereas an aligned model predicts a shifted aligned distribution. The Weak-to-Strong Jailbreaking paradigm utilizes three distinct models to approximate the high-capacity, pre-alignment capabilities of a strong model. \Cref{fig:three_dist} demonstrates their functional roles within our framework.

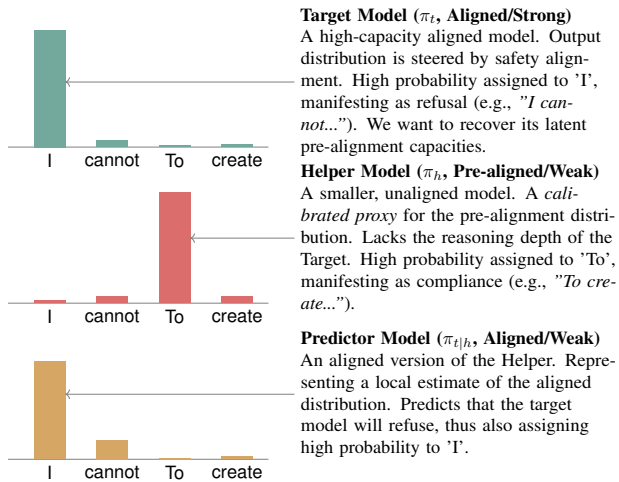
\begin{figure}[t]
    \centering
    \resizebox{\columnwidth}{!}{
        \begin{tikzpicture}[font=\sffamily]

            \definecolor{colTarget}{RGB}{115, 169, 156}  
            \definecolor{colHelper}{RGB}{219, 109, 109}  
            \definecolor{colPred}{RGB}{214, 166, 101}    
            \definecolor{colGray}{RGB}{100, 100, 100}

            \def\barw{0.6}   
            \def\gap{0.3}    
            \def\yscale{2.5} 
            
            
            \begin{scope}[local bounding box=topchart]
                \draw[gray] (-0.5,0) -- (4.5,0);
                
                \fill[colTarget] (0,0) rectangle ++(\barw, 0.9*\yscale);
                \node[below] at (0.5*\barw, 0) {I};
                
                \fill[colTarget] (1.2,0) rectangle ++(\barw, 0.05*\yscale);
                \node[below] at (1.2+0.5*\barw, 0) {cannot};

                \fill[colTarget] (2.4,0) rectangle ++(\barw, 0.01*\yscale);
                \node[below] at (2.4+0.5*\barw, 0) {To};

                \fill[colTarget] (3.6,0) rectangle ++(\barw, 0.02*\yscale);
                \node[below] at (3.6+0.5*\barw, 0) {create};

                \node[anchor=west, text width=6cm, align=left, font=\normalsize] (desc1) at (5.0, 0.5*\yscale) {
                    \textbf{Target Model ($\pi_t$, Aligned/Strong)}\\
                    A high-capacity aligned model. Output distribution is steered by safety alignment. High probability assigned to 'I', manifesting as refusal (e.g., \textit{"I cannot..."}). We want to recover its latent pre-alignment capacities.
                };
                \draw[->, thin, gray] (desc1.west) -- (\barw, 0.5*\yscale);
            \end{scope}

            \begin{scope}[yshift=-3.0cm, local bounding box=midchart]
                \draw[gray] (-0.5,0) -- (4.5,0);
                
                \fill[colHelper] (0,0) rectangle ++(\barw, 0.02*\yscale);
                \node[below] at (0.5*\barw, 0) {I};
                
                \fill[colHelper] (1.2,0) rectangle ++(\barw, 0.05*\yscale);
                \node[below] at (1.2+0.5*\barw, 0) {cannot};

                \fill[colHelper] (2.4,0) rectangle ++(\barw, 0.85*\yscale);
                \node[below] at (2.4+0.5*\barw, 0) {To};

                \fill[colHelper] (3.6,0) rectangle ++(\barw, 0.05*\yscale);
                \node[below] at (3.6+0.5*\barw, 0) {create};

                \node[anchor=west, text width=6cm, align=left, font=\normalsize] (desc2) at (5.0, 0.5*\yscale) {
                    \textbf{Helper Model ($\pi_h$, Pre-aligned/Weak)}\\
                    A smaller, unaligned model. A \textit{calibrated proxy} for the pre-alignment distribution. Lacks the reasoning depth of the Target. High probability assigned to 'To', manifesting as compliance (e.g., \textit{"To create..."}).
                };
                \draw[->, thin, gray] (desc2.west) -- (2.4+\barw, 0.5*\yscale);
            \end{scope}

            \begin{scope}[yshift=-6.0cm, local bounding box=botchart]
                \draw[gray] (-0.5,0) -- (4.5,0);
                
                \fill[colPred] (0,0) rectangle ++(\barw, 0.75*\yscale);
                \node[below] at (0.5*\barw, 0) {I};
                
                \fill[colPred] (1.2,0) rectangle ++(\barw, 0.15*\yscale);
                \node[below] at (1.2+0.5*\barw, 0) {cannot};

                \fill[colPred] (2.4,0) rectangle ++(\barw, 0.01*\yscale);
                \node[below] at (2.4+0.5*\barw, 0) {To};

                \fill[colPred] (3.6,0) rectangle ++(\barw, 0.02*\yscale);
                \node[below] at (3.6+0.5*\barw, 0) {create};

                \node[anchor=west, text width=6cm, align=left, font=\normalsize] (desc3) at (5.0, 0.5*\yscale) {
                    \textbf{Predictor Model ($\pi_{t|h}$, Aligned/Weak)}\\
                    An aligned version of the Helper. Representing a local estimate of the aligned distribution. Predicts that the target model will refuse, thus also assigning high probability to 'I'. 
                };
                \draw[->, thin, gray] (desc3.west) -- (\barw, 0.5*\yscale);
            \end{scope}

        \end{tikzpicture}
    }
    \caption{Next-token probability distributions for the prompt ``How to make a bomb?''. The charts illustrate the divergence between the aligned Target (refusal), the unaligned Helper (compliance), and the Predictor (which anticipates refusal). By comparing the Predictor to the Helper, we can isolate the alignment shift—the systematic transformation applied to the pre-alignment distribution to enforce safety constraints.}
    \label{fig:three_dist}
\end{figure}

By situating these models within a shared geometric space, we reformulate jailbreaking as a forecast aggregation problem. We aggregate the logits of the three models to obtain a predictor that better approximates the pre-alignment distribution than the target model. This, in turn, can recover behaviors that are suppressed by alignment at inference time. We quantify the prediction performance using proper losses (e.g., cross-entropy loss, quadratic loss).

We show that the optimal aggregation strategy is an intuitive Gradient Shift rule. The rule uses the shift observed between the weak pair ($\modelWeak$ and $\modelPred$) to ``undo'' the alignment transformation present in the Target ($\modelStrong$). Crucially, these operations must occur in the \emph{dual space} (the gradient space) defined by the loss function. 

We further prove that, in expectation, the aggregated model achieves strictly lower prediction loss with respect to the pre-alignment distribution than the target model. The improvement is lower bounded by the Bregman divergence between the helper and predictor distributions, which quantifies the magnitude of the shift.

The Gradient Shift rule yields a family of aggregation strategies parameterized by the choice of loss. Under cross-entropy loss, it reduces to a multiplicative update equivalent to prior logit-arithmetic methods, while quadratic loss produces an additive update.

Furthermore, the multiplicative update is highly sensitive to low-probability noise, leading to instability in practice. Additive aggregation rules exhibit greater numerical stability but can under-react to informative shifts. Motivated by this trade-off, we introduce a hybrid aggregation rule. We show that this hybrid retains optimality under specific conditions while demonstrating superior empirical robustness and performance compared to existing methods.

\paragraph{Contributions}

Here are summaries of the contributions. 
\begin{enumerate}[leftmargin=12pt, itemsep=1pt, topsep=1pt]
    \item \textbf{Theoretical Unification:} We propose a framework in which the effect of safety alignment on next-token prediction is modeled as a systematic distortion of a pre-alignment distribution. We cast Weak-to-Strong Jailbreaking as a forecast aggregation problem and derive an optimal aggregation strategy characterized by a Gradient Shift in the loss-induced dual space.

    \item \textbf{A Family of Strategies:} We show that the prior logit-arithmetic method is a special case of this framework under cross-entropy loss, and derive a broader family of aggregation rules corresponding to other proper losses. We also propose a new hybrid aggregation rule. 

    \item \textbf{Empirical Superiority:} We leverage these theoretical insights to expand the repertoire of jailbreaking strategies. Extensive evaluations on red-teaming benchmarks (HarmBench, StrongREJECT) and math utility tasks (GSM8K, Hendrycks-MATH) using frontier models (\textsf{Llama-3.3-70B-Instruct}, \textsf{gpt-oss-120b}) demonstrate that the proposed hybrid rule results in higher Attack Success Rates (ASR) and a minimized ``Jailbreak Tax'' compared to state-of-the-art baselines, with the clearest gains on the safety-hardened \textsf{gpt-oss-120b} and in maintaining near-zero jailbreak tax on math.
\end{enumerate}

While existing jailbreaking work often relies on heuristic techniques, we introduce a threat model and a theoretical framework that together formalize and explain several effective approaches and enable the discovery of new ones. This reframes jailbreaking as a principled optimization problem. The same principles also extend beyond jailbreaking, providing a mechanism for shifting a model toward a reference distribution, for example for jailbreak defense by shifting an unsafe model toward a safer distribution, or for capability transfer by transferring skills between two models or combining strengths.

\subsection{Related Work}

\paragraph{Jailbreaking LLMs}

Early studies on LLM jailbreaking relied on manual, heuristic prompts to bypass safety guardrails~\cite{kang2024exploiting}, including exploiting competing objectives~\cite{wei2023jailbroken}, leveraging tokenization or decoding biases~\cite{wei2025emoji, liu2025flipattack}, and roleplay prompting~\cite{li2023deepinception, shah2023scalable}.

Automated jailbreaking methods emerged to enhance efficiency. Techniques like PAIR~\cite{chao2025jailbreaking} and TAP~\cite{mehrotra2024tree} use attacker LLMs to iteratively refine jailbreak prompts~\cite{Perez2022RedTL}; later work builds on this setup by adding iterative reasoning~\cite{sabbaghi2025adversarial} or post-training attacker LLM~\cite{lochab2025vera, paulus2025advprompter} to make the search more efficient and scalable. In parallel, structural black-box attacks exploit longer context and task structure, including multi-turn jailbreaking ~\cite{russinovich2025great, huang2025endless, sun2024multi, yang2024chain}, decomposition attacks~\cite{jones2025adversaries}, and CoT hijacking/spoofing for reasoning models~\cite{kuo2025h, chen2025bag, lin2025aga, zhao2025chain}.

White-box methods require full access to model weights, enabling approaches such as gradient-based optimization \cite{zou2023universal, li2025largo, andriushchenko2025jailbreaking} and fine-tuning \cite{yang2023shadow, qi2024fine, zhan2024removing}. Among these, inference-time jailbreaking is typically the most lightweight: it leaves model parameters untouched and instead modifies the next-token distribution through logit/probability adjustments at decoding time. For example, \citet{huang2023catastrophic} search over decoding hyperparameters to find jailbreak-favorable settings. \citet{zhao2025weak} propose Weak-to-Strong, which uses a weaker unaligned model to steer the target model's logits. 

We advance this inference-time perspective and establish a formal link between LLM jailbreaking and statistical miscalibration. While prior work~\cite{NEURIPS2024_439bf902} adopts a statistical lens to prove the existence of jailbreaks via prompt modification, it does not focus on the role of calibration. Our framework reformulates the Weak-to-Strong paradigm~\citep{zhao2025weak} as a forecast aggregation problem subject to calibration constraints, yielding a family of optimal logit-manipulation rules. Crucially, we show that their method is a specific instance of this family, providing a principled explanation for its empirical effectiveness.

\paragraph{Calibration} While classifier calibration typically relies on labeled data using non-parametric methods like isotonic regression~\cite{10.1145/775047.775151} or parametric ones like Platt Scaling~\cite{platt1999probabilistic} and Temperature Scaling~\cite{guo2017calibration}, our approach operates in a label-free setting.
We share the insight of \citet{kong2026calibrationgroundtruth} that a weak but calibrated model can improve a strong yet miscalibrated one, but we differ in several key respects. First, we apply this idea to LLM jailbreaking, focusing on safety rather than general performance. Second, their method requires access to the joint input distribution and batch processing, whereas ours operates in a one-shot setting suitable for real-time intervention. Finally, unlike their approach, we aggregate the strong and weak models, leading to fundamentally different theoretical techniques.

\paragraph{Robust Aggregation}

We frame LLM jailbreaking as a forecast aggregation task under uncertainty, drawing on robust aggregation literature~\cite{arieli2018robust, neyman2022you, 10.1287/opre.2022.2414}. Our framework departs from standard models by addressing an explicitly miscalibrated Target and seeking strict improvement over the Target rather than competing with an omniscient benchmark. 
Furthermore, while we utilize higher-order beliefs, a concept explored in general aggregation \citep{prelec2017solution, wilkening2022hidden}, we are the first to apply this lens specifically to calibration transfer.
\section{Setup and Problem Statement}

We view a Large Language Model (LLM) as a probabilistic forecasting system. Formally, a model is a function $\model : \ctxsp \to \simplex$ that maps a context string to a probability distribution over the next token, where $\statesp$ denotes the token space and $\simplex$ the probability simplex over $\statesp$. For a fixed context $\inputctx$, the model outputs a predictive distribution $\mathbf{p} = \model(\cdot \mid \inputctx) \in \simplex$.

Throughout this section, we consider a fixed context $\inputctx$ and omit explicit dependence on $\inputctx$ when doing so does not introduce ambiguity.

\subsection{Safety Alignment as a Distributional Shift}

We introduce a probabilistic abstraction to reason about the effect of safety alignment on next-token prediction. This abstraction is not intended as a faithful model of the process of alignment, but as a modeling choice that enables analysis.

For each context $\inputctx$, we define two reference outcomes: the \emph{Pre-alignment outcome} $\stateRaw$, representing a reference next-token outcome prior to safety alignment; \emph{Aligned outcome} $\stateSafe$, representing the next-token outcome after safety alignment (e.g., RLHF).

We assume a joint distribution $\dist$ over $(\inputctx, \stateRaw, \stateSafe)$. For benign contexts (e.g., ``What is the capital of France?''), $\stateRaw$ and $\stateSafe$ are typically identical. For adversarial or safety-sensitive contexts (e.g., ``How to make a bomb?''), the two outcomes may differ substantially.

We define the pre-alignment distribution ($\statervRaw$) as \emph{ground-truth target} for evaluating predictive performance, as it represents the unconstrained capabilities we aim to recover.

\subsection{Threat Model}

We formalize a inference-time jailbreaking threat model where an attacker aggregates the jailbreak target with an auxiliary helper and predictor, and optimizes a robust improvement objective to recover behavior suppressed by alignment.

\begin{enumerate}[leftmargin=12pt, itemsep=0.5pt, topsep=1pt]
    \item \textbf{Target model ($\modelStrong$):} 
    A high-capacity, strongly aligned LLM producing a predictive distribution $\pStrong = \modelStrong(\cdot \mid \inputctx)$.
    \item \textbf{Helper model ($\modelWeak$):} 
    A lower-capacity, unaligned LLM producing a predictive distribution $\pWeak = \modelWeak(\cdot \mid \inputctx)$.
    \item \textbf{Predictor model ($\modelPred$):} 
    An LLM that predicts the target's output conditional on the helper's output, producing the distribution $\pPred = \modelPred(\cdot \mid \inputctx)$.
\end{enumerate}

The target model typically exhibits stronger reasoning capabilities but its predictions may be systematically shifted relative to the pre-alignment reference due to safety training. The predictor model approximates the target model's behavior conditioned on the helper's output and serves as a statistical estimate of alignment-induced effects.

We analyze the interaction between the random variables
\(
(\pStrongrv, \pWeakrv, \pPredrv, \statervRaw),
\)
where each denotes the random realization of the corresponding quantity under the data-generating distribution $\dist$.

\subsection{Objective and Structural Assumptions}

Our goal is to construct an aggregation rule that improves predictive performance with respect to the pre-alignment reference outcome $\statervRaw$, relative to the aligned target model. Intuitively, this corresponds to applying a correction to the target model's predictive distribution based on information provided by the helper and predictor models.

However, how to perform such corrections is ambiguous. To identify principled aggregation strategies, we leverage the geometry induced by strictly proper losses and their associated Bregman divergences.

\begin{assumption}\label{assume}
We posit the following assumptions:
\begin{enumerate}[leftmargin=12pt]
    \item \textbf{Helper is calibrated to $\statervRaw$.}  
    The helper model is conditionally calibrated with respect to the pre-alignment reference outcome in expectation:
    \[
        \Pr[\statervRaw = \state \mid \pWeakrv = \pWeak] = \pWeak(\state).
    \]

    \item \textbf{Unbiased conditional prediction.}  
    The predictor model outputs the conditional expectation of the target model's prediction given the helper's output:
    \[
        \E[\pStrongrv \mid \pWeakrv] = \pPredrv.
    \]
\end{enumerate}
\end{assumption}

The above assumption is idealized. However, we do not rely on this assumption holding strictly in practice. Rather, it serves as a base case to derive a tractable update rule. Crucially, we formally relax this requirement in \Cref{sec:relax}, proving that the theoretical guarantees are robust: the performance lower-bound degrades gracefully as a function of the approximation error. Our empirical results further validate this analysis, confirming that the derived approaches remain effective even when the helper and predictor serve as noisy proxies.

\paragraph{The Aggregation Problem}

We seek an aggregation function $f : \simplex^3 \to \simplex$ that produces a new predictive distribution
\[
\pNewrv = f(\pStrongrv, \pWeakrv, \pPredrv).
\]

\begin{definition}[Proper Loss]
A loss function $\score : \simplex \times \statesp \to \mathbb{R}$ is \emph{strictly proper} if the expected loss is uniquely minimized by reporting the true distribution.
\end{definition}

Our objective is to identify aggregation rules that improve expected predictive performance with respect to the pre-alignment $\statervRaw$, compared to the target model. Crucially, this must be achieved without access to the joint distribution of $(\pStrongrv, \pWeakrv, \pPredrv, \statervRaw)$. The aggregation is performed pointwise using only the observed predictions. Formally, let $\distsp$ denote the set of all joint distributions consistent with Assumption~\ref{assume}. We evaluate an aggregation rule $f$ by its worst-case expected improvement over the target model:
\[
v(f) = \inf_{\dist \in \distsp}
\left(
\E_{\dist} \left[
\score(\pStrongrv, \statervRaw)
-
\score(f(\pStrongrv, \pWeakrv, \pPredrv), \statervRaw)
\right]
\right).
\]

Our goal is to identify aggregation rules with $v(f)>0$ and characterize the rule $f^*$ that maximizes this improvement.

\section{Optimal Rule: Gradient Shift}

To derive this rule, we leverage the structural connection between strictly proper loss functions and convex geometry. Every strictly proper loss function $\score$ is generated by a strictly convex function $G: \Omega \to \mathbb{R}$, known as the generator function (or generalized entropy). The loss $\score(\mathbf{p}, y)$ can be written explicitly in terms of $G$ and its gradient $\nabla G$ (the Savage representation):
\[
\score(\mathbf{p}, y) = -G(\mathbf{p}) + \langle \nabla G(\mathbf{p}), \mathbf{p} - \delta_y \rangle,
\]
where $\delta_y$ is the Dirac vector for the realized outcome.

The generator function $G$ induces a Bregman divergence $\breg(\mathbf{p}, \mathbf{q})$, which measures the nonnegative gap between $G$ and its first-order Taylor approximation. It is defined as:
\[
\breg(\mathbf{p}, \mathbf{q}) := G(\mathbf{p}) - G(\mathbf{q}) - \langle \nabla G(\mathbf{q}), \mathbf{p} - \mathbf{q} \rangle.
\]
Crucially, the expected excess risk of predicting $\mathbf{q}$ when the true distribution is $\mathbf{p}$ equals this divergence:
\[
\E_{\staterv \sim \mathbf{p}} [\score(\mathbf{q}, \staterv)] - \E_{\staterv \sim \mathbf{p}} [\score(\mathbf{p}, \staterv)] = \breg(\mathbf{p}, \mathbf{q}).
\]

We restrict our attention to proper scoring rules induced by generator $G$ that satisfy the following regularity conditions.

\begin{definition}[Regularity Conditions]\label{def:reg}
    Let $G: \Omega \to \mathbb{R}$ be a strictly convex, differentiable function defined on a domain $\Omega \supseteq \Delta$. We require:
    \begin{enumerate}[leftmargin=12pt]
        \vspace{-1.0em}
        \item \textbf{Unbounded Dual Space:} The range of the gradient map $\nabla G(\Omega)$ is $\mathbb{R}^d$.
        \vspace{-0.5em}
        \item \textbf{Joint Convexity:} The induced Bregman divergence $\breg(\mathbf{p}, \mathbf{q})$ is convex in the second argument $\mathbf{q}$ (and hence jointly convex in $(\mathbf{p}, \mathbf{q})$).
    \end{enumerate}
\end{definition}

\begin{remark}
    The first condition does not require $\nabla G$ to be unbounded within the simplex $\Delta$; it is enough that the gradient map is surjective onto $\mathbb{R}^d$ over an extended domain $\Omega$ that contains $\Delta$. For the second condition, recall that a Bregman divergence $\breg(\mathbf{p}, \mathbf{q})$ is always convex in the first argument $\mathbf{p}$ by definition. Therefore, requiring convexity in $\mathbf{q}$ is enough to ensure joint convexity.
\end{remark}

Many standard scoring rules satisfy these conditions, including cross entropy and quadratic loss. We propose a projection-based update using the Bregman projection onto the probability simplex $\Delta$, defined as
$\proj(\mathbf{z}) = \argmin_{\mathbf{q} \in \Delta} \breg(\mathbf{q}, \mathbf{z})$.
The aggregated predictor $\pNew$ is obtained by applying the alignment shift in the dual (gradient) space and then projecting back (\Cref{fig:gradientshift}).

\begin{definition}[Gradient Shift]\label{def:grashift}
The gradient shift rule is defined as:
\begin{align*} \nonumber
\pNew=&f^*(\pStrong,\pWeak,\pPred)\\ 
=&  \proj \left( \gradGinv \big( \gradG(\pStrong) + \gradG(\pWeak) - \gradG(\pPred) \big) \right).
\end{align*}
\end{definition}

\begin{algorithm}[ht]\begin{small}
\caption{Inference with Gradient Shift}
\label{alg:gradshift_decode}
\begin{algorithmic}
\STATE \textbf{Input:} Target model $\pStrong$, helper $\pWeak$, predictor $\pPred$, prompt $x$
\STATE \textbf{Output:} Generated sequence $y$
\STATE $y \leftarrow \langle\ \rangle$
\REPEAT
  \STATE $p_{\text{new}} \leftarrow f^*(\pStrong(\cdot \mid x,y), \pWeak(\cdot \mid x,y), \pPred(\cdot \mid x,y))$
  \STATE $t \leftarrow \textsc{SampleNextToken}(p_{\text{new}})$
  \STATE $y \leftarrow y \,\Vert\, t$
\UNTIL{$t = \textsc{EOT}$}
\STATE \textbf{return} $y$
\end{algorithmic}
\end{small}\end{algorithm}

We show that Gradient Shift decreases the expected loss by at least $\breg(\pWeak,\pPred)$, with a tight bound that implies optimality.

\subsection{Examples.}

We illustrate this gradient shift for standard loss functions.

\textbf{1. Quadratic Loss.} 
With $G(\mathbf{p}) = \|\mathbf{p}\|_2^2$, the gradients are linear, and $\proj$ becomes the standard Euclidean projection. The update is \textbf{additive}:
\begin{equation*}
    \pNew_{\text{add}}(\state) = (\pStrong(\state) + \pWeak(\state) - \pPred(\state) - \tau)_+,
\end{equation*}
where $x_+ = \max(0, x)$ and $\tau$ is the unique threshold such that $\sum_\state \pNew(\state) = 1$. The predictor simply adds the calibration residual $(\pWeak - \pPred)$ to the target prediction $\pStrong$, and projects the result back to the simplex if needed.

\textbf{2. Cross Entropy Loss.} 
Cross Entropy is the most widely used loss in Machine Learning. With $G(\mathbf{p}) = \sum_i p_i \ln p_i$ (negative entropy), the dual space corresponds to log-probabilities (logits). The update becomes \textbf{multiplicative}:
\begin{equation*}
    \pNew_{\text{mult}}(\state) = \frac{1}{Z} \frac{\pStrong(\state) \cdot \pWeak(\state)}{\pPred(\state)},
\end{equation*}
where $Z$ is the normalization constant. This is a geometric mixture of predictors: we scale the target probability $\pStrong$ by the likelihood ratio $\pWeak / \pPred$ and then renormalize.

This method has also been independently proposed as a heuristic in prior work, such as DExperts for output attribute control and Weak-to-Strong for LLM jailbreaking~\cite{Liu2021DExpertsDC,zhao2025weak}. We prove that this heuristic is optimal under our objective.

While the multiplicative form is widely used, it can be ill-conditioned when probabilities are tiny and noisy, whereas the additive form is often overly conservative. We therefore derive a hybrid strategy that combines the multiplicative form and additive form.

\paragraph{A Hybrid Strategy} 
This hybrid strategy uses geometric scaling in the suppression regime ($\pWeak < \pPred$) and an additive residual correction for amplification:
\begin{equation*}
    \pNew_{\text{hy}}(\state) = 
    \begin{cases} 
        \pStrong(\state) \cdot \pWeak(\state) / \pPred(\state) & \hspace{-6pt}\pWeak(\state) < \pPred(\state), \\
        \pStrong(\state) + \epsilon \left( \pWeak(\state) - \pPred(\state) \right) & \hspace{6pt}\text{otherwise,}
    \end{cases}
\end{equation*}
where $\epsilon$ is a constant chosen so that $\sum_\state \pNew_{\text{hy}}(\state) = 1$.

We show that this rule yields a prediction \textit{calibrated} with respect to $\statervRaw$ and is \textit{optimal} in the binary setting under Cross Entropy loss (see \Cref{coro}). For multiclass next-token prediction, we use hybrid as a stability-motivated variant of the multiplicative rule, consistent with the relaxation in \Cref{thm:relaxed}: it matches the multiplicative update when $\pWeak(\state) < \pPred(\state)$, while avoiding unstable amplification in low-probability regions via an additive correction and renormalization. We justify this design empirically across models and benchmarks in \Cref{sec:experiments}.

\begin{figure}[t]
    \centering
    \resizebox{1.0\columnwidth}{!}{
        \begin{tikzpicture}[>=Stealth, font=\sffamily, line width=0.6pt]

    \definecolor{colTarget}{RGB}{115, 169, 156}  
    \definecolor{colHelper}{RGB}{219, 109, 109}  
    \definecolor{colPred}{RGB}{214, 166, 101}    
    \definecolor{colResult}{RGB}{66, 114, 165}   
    \definecolor{colGray}{RGB}{100, 100, 100}

    \coordinate (A) at (1.5, 2.6); 
    \coordinate (B) at (0, 0);     
    \coordinate (C) at (3, 0);     

    \draw[thick, fill=gray!5] (A) -- (B) -- (C) -- cycle;
    \node[anchor=south] at (1.5, 2.7) {\scriptsize \textbf{Probability Simplex}};

    \coordinate (pt) at (1, 0.9);   
    \coordinate (ph) at (1.5, 1.8);   
    \coordinate (pth) at (2, 0.6);  
    \coordinate (pstar) at (1.2, 1.89); 

    \fill[colTarget] (pt) circle (2.5pt) node[left, black, font=\tiny] {$p_t$};
    \fill[colHelper] (ph) circle (2.5pt) node[above, black, font=\tiny] {$p_h$};
    \fill[colPred] (pth) circle (2.5pt) node[right, black, font=\tiny] {$p_{t|h}$};
    \node[star, star points=5, star point ratio=2.25, fill=colResult, inner sep=1.2pt] at (pstar) {};
    \node[below, black, font=\tiny, yshift=-2pt] at (pstar) {$p^*$};

    \begin{scope}[xshift=4.5cm, yshift=0cm]
        
        \node[anchor=south] at (1.5, 2.6) {\scriptsize \textbf{Dual/Gradient Space}};

        \coordinate (Origin) at (0,0);
        \coordinate (v_pt) at (0.4, 1.0);   
        \coordinate (v_ph) at (1.2, 2.0);   
        \coordinate (v_pth) at (2.2, 0.8);  
        
        \coordinate (v_star) at ($(v_pt) + (v_ph) - (v_pth)$);

        \draw[->, thick, colTarget] (Origin) -- (v_pt) node[left, font=\tiny] {$\nabla G(p_t)$};
        \draw[->, thick, colHelper] (Origin) -- (v_ph) node[above, font=\tiny] {$\nabla G(p_h)$};
        \draw[->, thick, colPred] (Origin) -- (v_pth) node[right, font=\tiny] {$\nabla G(p_{t|h})$};
        \draw[->, thick, colResult] (Origin) -- (v_star);

        \draw[dashed, thick, brown] (v_pth) -- (v_ph) node[midway, right, font=\tiny, align=left] {Alignment\\Shift Vector};
        
        \draw[dashed, thick, brown, ->] (v_pt) -- (v_star);

    \end{scope}

    
    \draw[->, thick, gray] (2.0, 2.2) to[bend left=30] node[midway, above, font=\tiny] {$\nabla G$} (5.0, 2.5);

    \draw[->, thick, gray] (5.0, -0.2) to[bend left=30] node[midway, below, font=\tiny] {$(\nabla G)^{-1} \ \& \ \Pi_{\Delta}$} (2.0, -0.2);

\end{tikzpicture}
    }
    \caption{\textbf{Overview of the Gradient Shift method}}
    \label{fig:gradientshift}
\end{figure}
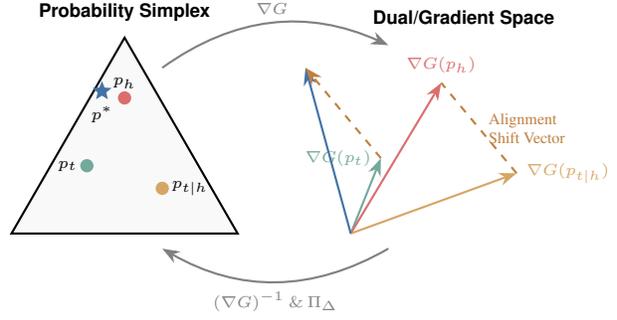

\section{Theoretical Guarantees}

We now present our main result: Gradient Shift rule is optimal, and the corresponding $\pNewrv$ strictly dominates the target model with respect to the pre-alignment data distribution.

\begin{restatable}[Optimality of Gradient Shift]{theorem}{thmmain}
\label{thm:main}
Let $\score$ be a strictly proper loss induced by $G$ satisfying the regularity conditions in \Cref{def:reg}. For any joint distribution $\dist$ consistent with \Cref{assume}, the gradient shift rule $\pNewrv=f^*(\pStrongrv,\pWeakrv,\pPredrv)$ (\Cref{def:grashift}) strictly dominates the target model $\pStrongrv$. Specifically,
\begin{equation*}
    \E \left[ \score(\pNewrv, \statervRaw) - \score(\pStrongrv, \statervRaw) \right] \geq \E \left[ \breg(\pWeakrv, \pPredrv) \right].
\end{equation*}
In fact, the following stronger conditional result holds pointwise. Conditioning on any fixed realization of the helper $\pWeakrv=\pWeak$ and predictor $\pPredrv=\pPred$,
\begin{equation*}\label{eq:strong_result}
    \E \left[ \score(\pNewrv, \statervRaw) - \score(\pStrongrv, \statervRaw)\right] \geq \breg(\pWeak, \pPred).
\end{equation*}
Furthermore, this lower bound is tight: for any given $(\pWeak, \pPred)$, there exists a joint distribution consistent with the assumptions such that the inequality holds with equality for any aggregation strategy $f(\cdot)$. Thus, the gradient shift rule $f^*$ is optimal, with maximal $v(f^*)=\max_f v(f)=\E[\breg(\pWeakrv, \pPredrv)]$.
\end{restatable}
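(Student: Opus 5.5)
Throughout I read the displayed inequalities as bounds on the \emph{improvement} $\score(\pStrongrv,\statervRaw)-\score(\pNewrv,\statervRaw)$, which is what ``strictly dominates'' and $v(f)$ refer to. The plan is to prove the conditional statement for a fixed realization $(\pWeak,\pPred)$; the unconditional one then follows by the tower property.

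\textbf{Step 1: remove the projection.} Set $\mathbf{z}=\gradGinv(\gradG(\pStrongrv)+\mathbf{s})$ with $\mathbf{s}:=\gradG(\pWeak)-\gradG(\pPred)$. This point exists by the unbounded-dual-space condition. Extend the Savage form of the loss to $\Omega$, so that $\score(\mathbf{q},y)=\breg(\delta_y,\mathbf{q})-G(\delta_y)$. Since $\delta_y\in\Delta$, the generalized Pythagorean inequality for Bregman projections gives $\breg(\delta_y,\proj(\mathbf{z}))\le \breg(\delta_y,\mathbf{z})$. Hence $\score(\pNewrv,y)\le\score(\mathbf{z},y)$ for every $y$, and it suffices to lower bound $\E[\score(\pStrongrv,\statervRaw)-\score(\mathbf{z},\statervRaw)\mid \pWeak,\pPred]$.

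\textbf{Step 2: split into a label-free part and a calibrated part.} Write $\theta_t=\gradG(\pStrongrv)$ and let $G^*$ be the convex conjugate, so that $G^*(\gradG(\mathbf{q}))=\langle\gradG(\mathbf{q}),\mathbf{q}\rangle-G(\mathbf{q})$. Expanding the two Bregman terms gives
\[
\score(\pStrongrv,y)-\score(\mathbf{z},y)=G^*(\theta_t)-G^*(\theta_t+\mathbf{s})+\langle \mathbf{s},\delta_y\rangle .
\]
The key point is that the $y$-dependent term involves only $\mathbf{s}$, which is fixed once we condition on $(\pWeak,\pPred)$. Therefore the correlation between $\pStrongrv$ and $\statervRaw$ plays no role. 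By helper calibration (\Cref{assume}), $\E[\langle\mathbf{s},\delta_{\statervRaw}\rangle\mid\pWeak,\pPred]=\langle\mathbf{s},\pWeak\rangle$. (If one worries that calibration conditions only on $\pWeakrv$, note that $\pPredrv$ is a function of $\pWeakrv$ by the second assumption, so the two conditionings coincide.)

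\textbf{Step 3: Jensen in the primal variable.} Define $h(\mathbf{q})=G^*(\gradG(\mathbf{q})+\mathbf{s})-G^*(\gradG(\mathbf{q}))$. The remaining term is $-\E[h(\pStrongrv)]$, and by the unbiasedness assumption $\E[\pStrongrv\mid\pWeak]=\pPred$. So if $h$ is concave on $\Delta$, Jensen gives $-\E[h(\pStrongrv)]\ge -h(\pPred)$. Evaluating at $\mathbf{q}=\pPred$, where $\gradG(\pPred)+\mathbf{s}=\gradG(\pWeak)$, a direct computation gives
\[
-h(\pPred)+\langle\mathbf{s},\pWeak\rangle=G(\pWeak)-G(\pPred)-\langle\gradG(\pPred),\pWeak-\pPred\rangle=\breg(\pWeak,\pPred).
\]
This is exactly the claimed bound.

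\textbf{Main obstacle: concavity of $h$.} I expect this step to be the crux, and it is where the joint-convexity regularity condition must enter. My first attempt is to rewrite
\[
-h(\mathbf{q})=G(\mathbf{r}(\mathbf{q}))-G(\mathbf{q})-\langle\gradG(\mathbf{q}),\mathbf{r}(\mathbf{q})-\mathbf{q}\rangle-\langle\mathbf{s},\mathbf{r}(\mathbf{q})\rangle
\]
with $\mathbf{r}(\mathbf{q})=\gradGinv(\gradG(\mathbf{q})+\mathbf{s})$. I would then try to express this as a supremum over $\mathbf{r}$ of expressions of the form $\breg(\mathbf{r},\mathbf{q})-\langle\mathbf{s},\mathbf{r}\rangle$, using the variational characterization $G^*(\theta+\mathbf{s})=\sup_{\mathbf{r}}\{\langle\theta+\mathbf{s},\mathbf{r}\rangle-G(\mathbf{r})\}$. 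Each such term is convex in $\mathbf{q}$ by convexity of $\breg$ in its second argument, and a supremum of convex functions is convex, which would give convexity of $-h$. I expect this to require care. A fallback is to verify it directly for the two worked examples: for quadratic loss $h$ is affine, and for cross entropy $h(\mathbf{q})=\ln\sum_y \mathbf{q}(y)e^{s_y}$, which is concave.

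\textbf{Tightness.} Fix $(\pWeak,\pPred)$ and take $\pStrongrv\equiv\pPred$ deterministically, with $\statervRaw\sim\pWeak$. This distribution satisfies both parts of \Cref{assume}. Any rule $f$ outputs some fixed $\mathbf{q}$. By properness its improvement is
\[
\E[\score(\pPred,\statervRaw)]-\E[\score(\mathbf{q},\statervRaw)]\le \breg(\pWeak,\pPred),
\]
with equality at $\mathbf{q}=\pWeak$, which is exactly what $f^*$ outputs in this case. Hence $v(f)\le\E[\breg(\pWeakrv,\pPredrv)]\le v(f^*)$, which establishes optimality.
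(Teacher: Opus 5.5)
Your Step 2 is the paper's three-point identity written in conjugate form. Your projection and tightness steps are sound; the tightness argument via $\pStrongrv\equiv\pPred$, $\statervRaw\sim\pWeak$ is more direct than the paper's minimax/Jensen upper bound.

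The gap is the step you flagged, and the resolution you sketch fails as stated. $G^*(\gradG(\mathbf q)+\mathbf s)$ enters $-h$ with a minus sign, so the variational formula for $G^*$ produces an \emph{infimum}, not a supremum. Using $G^*(\gradG(\mathbf q))=\langle\gradG(\mathbf q),\mathbf q\rangle-G(\mathbf q)$, you get
\[
-h(\mathbf q)=\inf_{\mathbf r\in\Omega}\bigl\{\breg(\mathbf r,\mathbf q)-\langle\mathbf s,\mathbf r\rangle\bigr\},
\]
with the infimum attained at $\mathbf r(\mathbf q)$. So ``a supremum of convex functions is convex'' is not available. A pointwise infimum of functions that are merely convex in $\mathbf q$ need not be convex, so convexity of $\breg$ in its second argument alone does not give concavity of $h$. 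Your fallback settles only the two examples, not the general theorem. (For cross entropy with the extended-domain generator your Step 1 needs, $G^*(\theta)=\sum_y e^{\theta_y-1}$, so $h(\mathbf q)=\sum_y\mathbf q(y)(e^{s_y}-1)$ is affine rather than a log-sum; it is concave either way.)

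The missing ingredient is genuine joint convexity. Set $\tilde h(\mathbf r,\mathbf q):=\breg(\mathbf r,\mathbf q)-\langle\mathbf s,\mathbf r\rangle$, which is jointly convex by the second regularity condition. Minimizing a jointly convex function over one block of variables yields a convex function of the remaining block. Hence $-h=\inf_{\mathbf r}\tilde h(\mathbf r,\cdot)$ is convex, and your Step 3 goes through.

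Equivalently, and this is what the paper does, you can bypass concavity of $h$ and apply Jensen to the pair, conditionally on $(\pWeak,\pPred)$:
\[
\E\bigl[\tilde h(\mathbf r(\pStrongrv),\pStrongrv)\bigr]\ge\tilde h\bigl(\E[\mathbf r(\pStrongrv)],\pPred\bigr)\ge\min_{\mathbf z}\tilde h(\mathbf z,\pPred)=\tilde h(\pWeak,\pPred).
\]
The minimizer is $\pWeak$ because $\gradG(\pPred)+\mathbf s=\gradG(\pWeak)$. Adding $\langle\mathbf s,\pWeak\rangle$ then gives exactly $\breg(\pWeak,\pPred)$.
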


Consequently, the performance improvement is strictly positive provided there is a divergence between the helper model's output distribution and its prediction of the target model. Furthermore, we \emph{relax} our initial assumptions to evaluate robustness when predictor and helper are approximations rather than exact conditional expectations. In such cases, we show that while a residual error term is introduced, the gradient shift rule still yields a net benefit that scales with the accuracy of these approximations (\Cref{thm:relaxed}).

\begin{restatable}{corollary}{cormain}\label{coro}
Under the assumptions of \Cref{thm:main}, the following rules are optimal for their respective loss functions: the \emph{additive shift} is optimal under quadratic loss; the \emph{multiplicative shift} is optimal under cross entropy loss; and the \emph{hybrid strategy} is optimal under cross entropy loss in the binary setting $|\statesp|=2$.
\end{restatable}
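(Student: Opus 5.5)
The plan is to reduce each of the three claims to \Cref{thm:main}. That theorem shows that the gradient shift rule $f^*$ attains the maximal worst-case value $v(f^*)=\E[\breg(\pWeakrv,\pPredrv)]$, and that no rule can do better. So for the additive and multiplicative rules it suffices to (i) check that the generator satisfies the regularity conditions of \Cref{def:reg}, and (ii) check that \Cref{def:grashift} evaluates to the stated closed form.

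\emph{Quadratic loss.} Take $G(\mathbf{p})=\|\mathbf{p}\|_2^2$ on $\Omega=\mathbb{R}^d$. Then $\nabla G(\mathbf{p})=2\mathbf{p}$ is a bijection onto $\mathbb{R}^d$, and $\breg(\mathbf{p},\mathbf{q})=\|\mathbf{p}-\mathbf{q}\|_2^2$ is jointly convex, so the regularity conditions hold. Because $\nabla G$ is linear, $\gradGinv(\nabla G(\pStrong)+\nabla G(\pWeak)-\nabla G(\pPred))=\pStrong+\pWeak-\pPred$. The map $\proj$ is the Euclidean projection onto $\Delta$, and its KKT conditions give exactly the thresholded form $(\cdot-\tau)_+$ with $\tau$ fixed by normalization. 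Hence $f^*=\pNew_{\text{add}}$.

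\emph{Cross entropy.} Take $G(\mathbf{p})=\sum_i p_i\ln p_i$ on the open positive orthant. Then $\nabla G(\mathbf{p})=\ln\mathbf{p}+\mathbf{1}$ is onto $\mathbb{R}^d$, and the induced divergence is the generalized KL divergence, which is jointly convex. Adding gradients gives $\ln\pStrong+\ln\pWeak-\ln\pPred+\mathbf{1}$, and inverting gives the unnormalized vector $\pStrong\pWeak/\pPred$. The KL projection of a positive vector $\mathbf{z}$ onto $\Delta$ is $\mathbf{z}/\sum_\state z_\state$, which one checks from the Lagrangian. Hence $f^*=\pNew_{\text{mult}}$.

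\emph{Hybrid, binary case.} Here I do not expect $\pNew_{\text{hy}}$ to coincide with $\pNew_{\text{mult}}$ pointwise, since their normalizations differ. Instead I would show directly that the hybrid rule attains the bound of \Cref{thm:main} conditionally on each $(\pWeak,\pPred)$; optimality then follows from the tightness part of that theorem.

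Write $\statesp=\{a,b\}$. Since $\sum_\state(\pWeak(\state)-\pPred(\state))=0$, either the two vectors are equal, in which case both rules return $\pStrong$ and the bound is $0$, or exactly one outcome, say $a$, has $r:=\pWeak(a)/\pPred(a)<1$. Then $\pNew_{\text{hy}}(a)=r\,\pStrong(a)$ and $\pNew_{\text{hy}}(b)=1-r\,\pStrong(a)$.

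The improvement $\E[\ln \pNew_{\text{hy}}(\statervRaw)-\ln\pStrongrv(\statervRaw)]$ then splits into two terms:
\[
\Pr[\statervRaw=a]\ln r \;+\; \E\big[\mathbf{1}\{\statervRaw=b\}\,g(\pStrongrv(a))\big],
\qquad g(t)=\ln\tfrac{1-rt}{1-t}.
\]
I would use the following facts:
\begin{itemize}
\item $g$ is increasing and convex on $[0,1)$, since $g''(t)=(1-t)^{-2}-r^2(1-rt)^{-2}\ge 0$ when $r\le 1$;
\item $\pNew_{\text{hy}}(a)$ is linear in $\pStrong(a)$, so by \Cref{assume} its conditional mean is $\pWeak(a)$, i.e.\ the hybrid is calibrated in mean;
\item $\Pr[\statervRaw=a\mid\pWeak]=\pWeak(a)$ by calibration of the helper.
\end{itemize}
The target is to combine these to lower-bound the second term and reach $\pWeak(a)\ln r+\pWeak(b)\ln\frac{\pWeak(b)}{\pPred(b)}=\kl(\pWeak,\pPred)$.

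The main obstacle is the second term. The conditional law of $\pStrongrv(a)$ given $\statervRaw=b$ is not pinned down by \Cref{assume}, so Jensen cannot be applied naively. My first attempt would mirror whatever Bregman three-point argument the proof of \Cref{thm:main} uses, now in the binary KL geometry. That means writing the improvement as a Bregman difference plus a cross term, and using linearity of $\pNew_{\text{hy}}$ in $\pStrong$ together with $\E[\pStrongrv\mid\pWeakrv]=\pPredrv$ to kill the cross term. Failing that, I would minimize the second term over all couplings of $(\pStrongrv,\statervRaw)$ consistent with the two marginal constraints, which is a small linear program. I would then argue by the convexity of $g$ that the minimizer places $\pStrongrv(a)$ deterministically at $\pPred(a)$.
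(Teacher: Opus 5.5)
Your treatment of the additive and multiplicative rules is correct and follows the paper's route. The paper just says they follow from \Cref{thm:main}; your checks of the regularity conditions and of the Euclidean and KL projections fill in what it leaves implicit.

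The hybrid part, however, cannot be completed, and the obstacle you flagged is fatal. The inequality you need, $\E[\mathbf{1}\{\statervRaw=b\}\,g(\pStrongrv(a))]\ge \pWeak(b)\,g(\pPred(a))$, is false under \Cref{assume}. Nothing stops the adversary from placing $\statervRaw=b$ on small values of $\pStrongrv(a)$, where the increasing function $g$ is small. Concretely, take $\pPred=(\tfrac12,\tfrac12)$ and $\pWeak=(\tfrac14,\tfrac34)$, so $r=\tfrac12$ and the hybrid is $(\tfrac12\pStrong(a),\,1-\tfrac12\pStrong(a))$. Let $(\pStrongrv(a),\statervRaw)$ equal $(\tfrac25,b)$ with probability $\tfrac34$ and $(\tfrac45,a)$ with probability $\tfrac14$. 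Both moment constraints hold. Yet the hybrid's improvement is $\tfrac34\ln\tfrac43+\tfrac14\ln\tfrac12\approx 0.042$, while $\kl(\pWeak,\pPred)=\tfrac34\ln\tfrac32+\tfrac14\ln\tfrac12\approx 0.131$. On the same distribution the multiplicative rule gets $\approx 0.148$, as \Cref{thm:main} guarantees.

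Carrying out your plan (2) exposes this. Let $m=\E[\pStrongrv(a)\mid\statervRaw=b]$. The constraints only force $m\ge m^*:=(\pPred(a)-\pWeak(a))/\pWeak(b)$, and $m^*<\pPred(a)$ whenever $\pWeak(a)>0$ and $\pPred(a)<1$. Conditional Jensen does concentrate $\pStrongrv(a)$ on $\{\statervRaw=b\}$, but at $m^*$, not at $\pPred(a)$. The worst case is therefore $\pWeak(a)\ln r+\pWeak(b)\,g(m^*)$. In the example $m^*=\tfrac13$, and this equals $\tfrac34\ln\tfrac54-\tfrac14\ln 2<0$. So the hybrid does not even dominate the target, and $v(f_{\mathrm{hy}})<0\le v(f^*)$.

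Plan (1) fails for the very reason \Cref{thm:main} succeeds. There, the cross term $\langle\nabla G(f(\mathbf{P}))-\nabla G(\mathbf{P}),\,\mathbf{Y}-f(\mathbf{P})\rangle$ has a coupling-independent expectation because the dual-space shift is the constant vector $\nabla G(\pWeak)-\nabla G(\pPred)$. For the hybrid, the $b$-coordinate of that shift is $g(\pStrongrv(a))$, so its pairing with $\mathbf{Y}$ depends on the coupling. Linearity of the hybrid in $\pStrong$ does not help, since the nonlinearity sits in the logarithm.

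The paper's own proof makes exactly the naive step you warned against. It writes the gain as $\E_{\dist}\sum_y\Pr(Y=y)\log(f(\mathbf{p})_y/\mathbf{p}_y)$ and pulls the weight out as $\E[Y_y]$ before applying Jensen in $\mathbf{p}_y$. That silently assumes $\statervRaw$ is independent of $\pStrongrv$ given the helper, whereas the feasible set in the proof of \Cref{thm:main} allows arbitrary couplings with the given means. Under that extra conditional-independence assumption, your decomposition closes in one line: $\E[\mathbf{1}\{\statervRaw=b\}\,g(\pStrongrv(a))]=\pWeak(b)\,\E[g(\pStrongrv(a))]\ge\pWeak(b)\,g(\pPred(a))$ by convexity of $g$. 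Without it, the hybrid clause of the corollary is false as stated, so no route can establish it.
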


\paragraph{Proof Sketch} We focus on proving the conditional version \eqref{eq:strong_result}, since taking the expectation of this result over $\pWeakrv$ and $\pStrongrv$ yields the main theorem. We formalize the problem as a minimax game. The adversary chooses a joint distribution over the target model's prediction $\pStrongrv$ and the pre-aligned $\statervRaw$, subject to $\E_{\dist}[\pStrongrv]=\pPred$ and $\E_{\dist}[\statervRawvec]=\pWeak$. Here $\statervRawvec$ denotes the one-hot vector $\delta_{\statervRaw}$. The player chooses an aggregator $f$. The objective is
\[
\E_{\dist}\!\left[\score(\pStrongrv, \statervRaw) - \score\!\left(f(\pStrongrv,\pWeak,\pPred), \statervRaw\right)\right].
\]
The player aims to maximize the performance gain, while the adversary aims to minimize it.

We shorthand $\stateRaw$ by $\state$, $\pStrong$ by $\pvec$, $\pPred$ by $\bar{\pvec}$, and $\pWeak$ by $\bar{\statevec}$. Since $\pWeak$ and $\pPred$ are fixed parameters, we drop them from the input of $f$ for clarity.  With this notation, the minimax game becomes:
\begin{align*}
    V^* = \sup_f \inf_{\substack{\dist(\mathbf{P},Y) \\ \E_{\dist}[\mathbf{P}]=\bar{\mathbf{p}}, \, \E_{\dist}[\statervvec]=\bar{\statevec}}} 
    \E_{\dist}\!\left[\score(\pvec, \state) - \score\!\left(f(\pvec), \state\right)\right].
\end{align*}
We first identify an upper bound by noting that if the adversary (choosing $\dist$) plays first, they can minimize the potential gain by removing all randomness and fixing the data at the means $(\bar{\pvec}, \bar{\statevec})$, due to convexity of the loss. We then show that this value is achievable using the gradient-shift rule without projection (\Cref{def:grashift}). Finally, we apply the Generalized Pythagorean Theorem to show that projecting this strategy onto the probability simplex ensures feasibility while preserving optimality.

For the hybrid strategy, it suffices to show that the minimal gain equals the KL divergence $\kl(\pWeak, \pPred)$ in the binary setting. We use two key lemmas: (1) the expectation of the hybrid aggregator equals the prediction, i.e., $\E[\mathbf{p}_{\text{hy}}]=\pPred$; and (2) the performance gain is convex under the hybrid aggregator. As a result, the worst case occurs at a degenerate distribution, where the gain equals $\kl(\pWeak, \pPred)$.

\section{Experimental Setup}
\label{sec:experiments}

We evaluate our jailbreaking solutions with two complementary experiments to test generality across settings. We vary the target task and helper-model choice to assess whether our approach extends beyond a single benchmark under a unified framework: (1) a fixed-threat-model red-teaming study comparing inference-time aggregation rules under a shared protocol; and (2) a math jailbreaking experiment following the established jailbreak-tax methodology with comparisons to multiple prior jailbreak baselines. Implementation details are provided in Appendix~\ref{sec:expapp}.

Our theoretical framework optimizes a proper loss with respect to the pre-aligned distribution $\statervRaw$, which is not directly observable. We therefore report Attack Success Rate (ASR) and Jailbreak Tax. Higher ASR and lower jailbreak tax indicates that the aggregation rules recover properties of $\statervRaw$ while preserving the model's capabilities.

\subsection{Red-teaming Jailbreaking}
\label{sec:redteam-jailbreak}

In this experiment, we evaluate red-teaming jailbreaks on harmful benchmark prompts, measuring both ASR and the response harmfulness, since successful jailbreaks can still be low-quality or non-actionable.

\paragraph{Models and Settings.}
We evaluate two target-model families: \textsf{Llama-3.3-70B-Instruct} as a standard LLM, and \textsf{gpt-oss-120b} as a Large Reasoning Model (LRM)~\cite{grattafiori2024llama,agarwal2025gpt}. Both are recent instruction-tuned models with stronger post-training safety alignment. In particular, \textsf{gpt-oss} uses advanced alignment techniques such as deliberative alignment and instruction hierarchy~\cite{guan2024deliberative,wallace2024instruction}.

Our gradient-shift method uses a helper and a predictor model. For \textsf{Llama-3.3-70B-Instruct}, we use \textsf{Orenguteng--Llama-3.1-8B-Lexi-Uncensored-V2} (helper) and \textsf{Llama-3.1-8B-Instruct} (predictor)~\cite{orenguteng2024llama3.1lexi}. For \textsf{gpt-oss-120b}, we use \textsf{ArliAI--gpt-oss-20b-Derestricted} (helper) and \textsf{gpt-oss-20b} (predictor)~\cite{gptoss20bderestricted2025}.

Prior jailbreak methods assume different threat models and attack budgets (e.g., prompt-optimization, multi-turn interaction, external judges), making direct ASR/harmfulness comparisons under a single standardized protocol misleading. 
Therefore, in this experiment we focus on controlled comparisons within the inference-time aggregation family. Broader comparisons against prior jailbreak methods are reported under the jailbreak-tax math evaluation (\Cref{fig:gsm8k}). As a baseline, we use \textbf{Weak-to-Strong}~\cite{zhao2025weak}, which applies logit manipulation only to the first $k$ generated tokens and scales its magnitude by an amplification factor $\alpha \in [1,\infty)$ to improve ASR. When $\alpha=1$ and $k=\infty$, Weak-to-Strong matches our multiplicative aggregator.

\paragraph{Datasets.} We evaluate on two widely adopted red-teaming jailbreaking datasets, \textbf{HarmBench} and \textbf{StrongREJECT}~\cite{mazeika2024harmbench,souly2024strongreject}. HarmBench is a large-scale evaluation suite for automated red teaming and robust refusal, while StrongREJECT provides forbidden prompts with rubric-based evaluators that score how well a response contains disallowed harmful content. We use all 200 prompts from HarmBench and a 200-prompt subset from StrongREJECT for evaluation.

\paragraph{Metrics.}
We evaluate both attack success and harmfulness of the generated outputs.

\begin{itemize}[leftmargin=12pt]
    \vspace{-1.0em}
    \item \textbf{Attack Success Rate (ASR).} We report ASR using the HarmBench ASR evaluator (a transformer-based classifier)~\cite{mazeika2024harmbench}. ASR is the fraction of prompts where the target model is judged to have produced the requested harmful response. However, ASR alone does not capture the harmfulness.
    \vspace{-0.5em}
    \item \textbf{Harmfulness level.} We score harmfulness using the \textbf{StrongREJECT rubric evaluator}, which uses a remotely hosted frontier model to score responses according to a rubric~\cite{souly2024strongreject}. Due to space constraints, we defer additional widely used automated judges, \textbf{MD-Judge}~\cite{Li2024SALADBenchAH} and \textbf{HarmScore}~\cite{chan2025speak}, to Appendix~\ref{sec:expapp}.
    \vspace{-0.5em}
\end{itemize}

\subsection{Math Jailbreaking}
\label{sec:math-jailbreak}

In this experiment, we evaluate math jailbreaking under the jailbreak tax methodology~\cite{nikolic2025jailbreak}, which uses ground-truth math benchmarks by aligning models to refuse math and then measuring how much verifiable utility is recovered after jailbreaking. With ground-truth solutions, we directly measure both success rate and answer accuracy.

\paragraph{Models and Settings.}
We jailbreak a target model explicitly LoRA-tuned to refuse math: \textsf{ethz-spylab--Llama-3.1-70B-Instruct\_refuse\_math}. Our gradient-shift method uses \textsf{meta-llama--Llama-3.1-8B-Instruct} as the helper and \textsf{ethz-spylab--Llama-3.1-8B-Instruct\_refuse\_math} as the predictor~\cite{nikolic2025jailbreak}. For comparison, we reprint the results of prior jailbreaks reported by \citet{nikolic2025jailbreak} and evaluate Weak-to-Strong as an extra baseline.

\paragraph{Datasets.}
We evaluate on two math benchmarks with ground-truth answers: \textbf{GSM8K} and \textbf{Hendrycks-MATH}~\cite{cobbe2021gsm8k,hendrycksmath2021}. GSM8K consists of grade-school word problems, while MATH contains competition-style problems across difficulty levels. We report results on GSM8K and MATH (levels 1, 3, 5), using 300-example subsets for each setting.

\paragraph{Metrics.}
We decompose jailbreak performance into \textbf{answering} and \textbf{correctness}:
\begin{itemize}[leftmargin=12pt]
    \vspace{-1.0em}
    \item \textbf{Success rate.} Fraction of prompts where the refuse-math target gives a final answer (i.e., does not refuse), regardless of correctness.
    \vspace{-0.5em}
    \item \textbf{Correct rate.} Fraction of prompts where the target gives a final answer and it is correct.
    \vspace{-1.0em}
\end{itemize}

We compute jailbreak tax as the fraction of baseline utility not recovered after jailbreaking. Let $\mathrm{BenchAcc}$ denote the accuracy of the unaligned benchmark model (\textsf{Llama-3.1-70B-Instruct}) and $\mathrm{TrueAcc}$ the accuracy conditioned on successful jailbreaks. Following \citet{nikolic2025jailbreak}, jailbreak tax is defined as \[\mathrm{JTax}=1- \mathrm{TrueAcc} / \mathrm{BenchAcc},\] where larger values indicate less recovered pre-alignment math ability.

\begin{figure*}[!ht]
\centering
\begin{minipage}[t]{0.642\textwidth}
  \centering
  \begin{subfigure}[t]{0.498\textwidth}
    \centering
    \includegraphics[width=\linewidth]{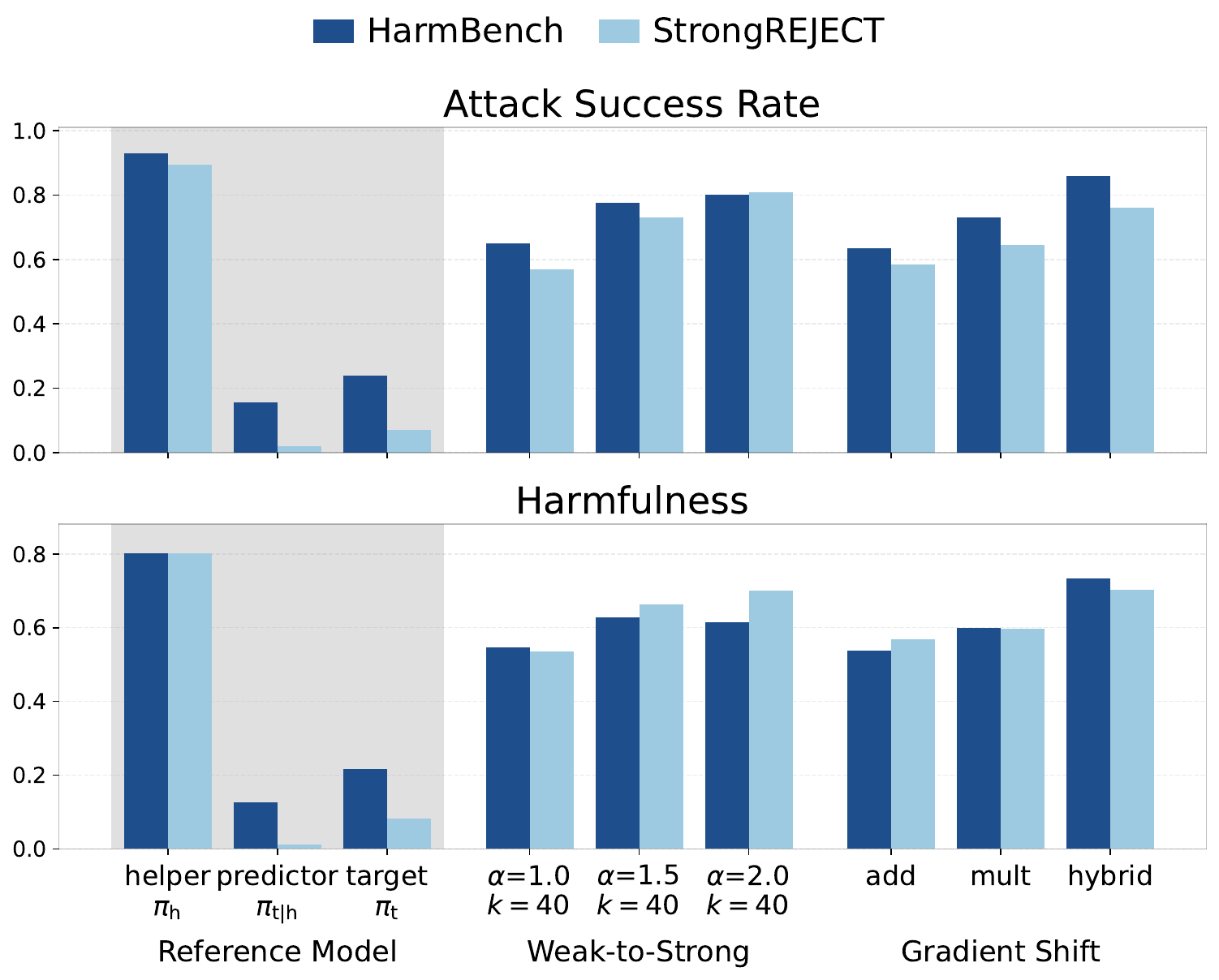}
    \caption{Jailbreaking \textsf{Llama-3.3-70B-Instruct}}
    \label{fig:redteam_llm}
  \end{subfigure}\hfill
  \begin{subfigure}[t]{0.498\textwidth}
    \centering
    \includegraphics[width=\linewidth]{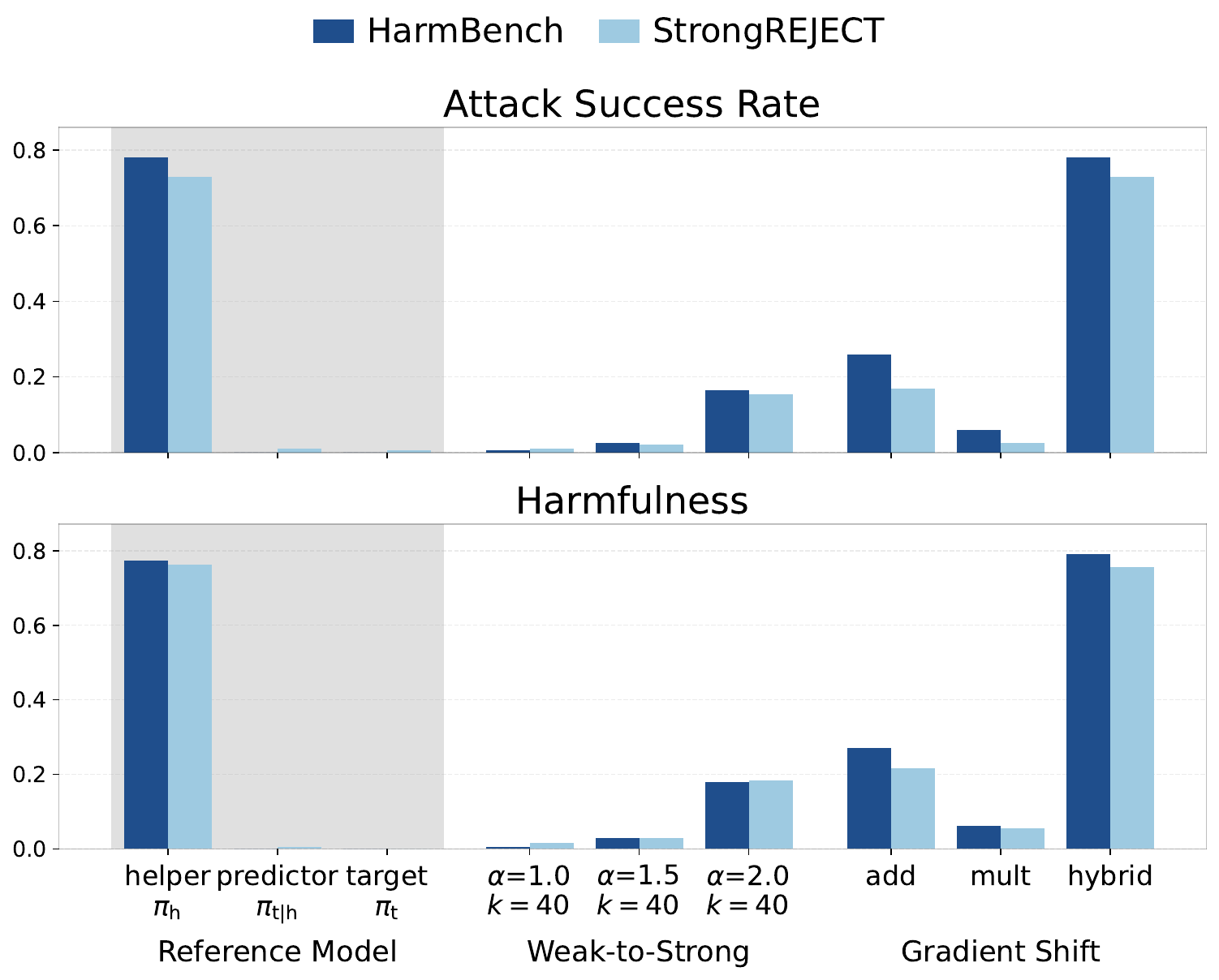}
    \caption{Jailbreaking \textsf{gpt-oss-120b}}
    \label{fig:redteam_lrm}
  \end{subfigure}

  \captionof{figure}{\textbf{Red-teaming jailbreaking LLM and LRM}: On both HarmBench and StrongREJECT datasets, our Hybrid Gradient Shift successfully jailbreaks both standard LLM and LRM, outperforming Weak-to-Strong and other Gradient Shift variants.}
  \label{fig:redteam_main}
\end{minipage}\hfill
\begin{minipage}[t]{0.35\textwidth}
  \centering
  \begin{tikzpicture}
    \node[anchor=south west, inner sep=0] (base) at (0,0)
      {\includegraphics[width=0.95\linewidth]{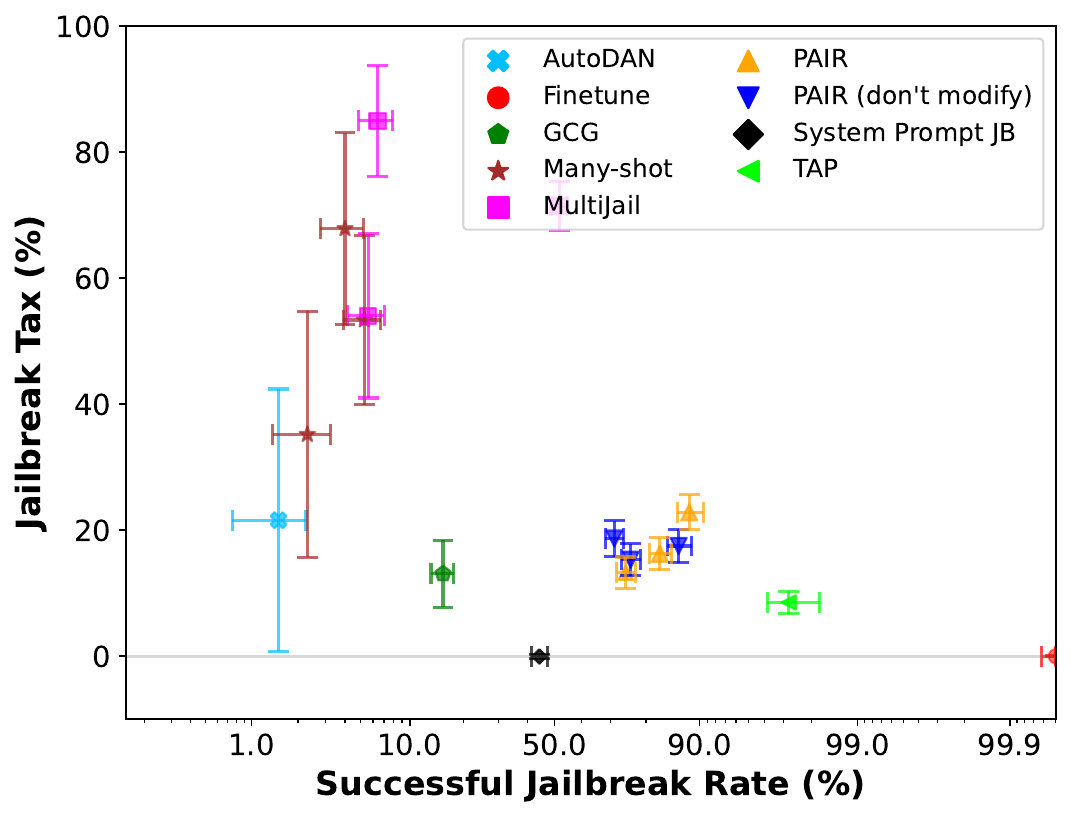}};
    \node[anchor=south west, inner sep=0] at (base.south west)
      {\includegraphics[width=0.95\linewidth]{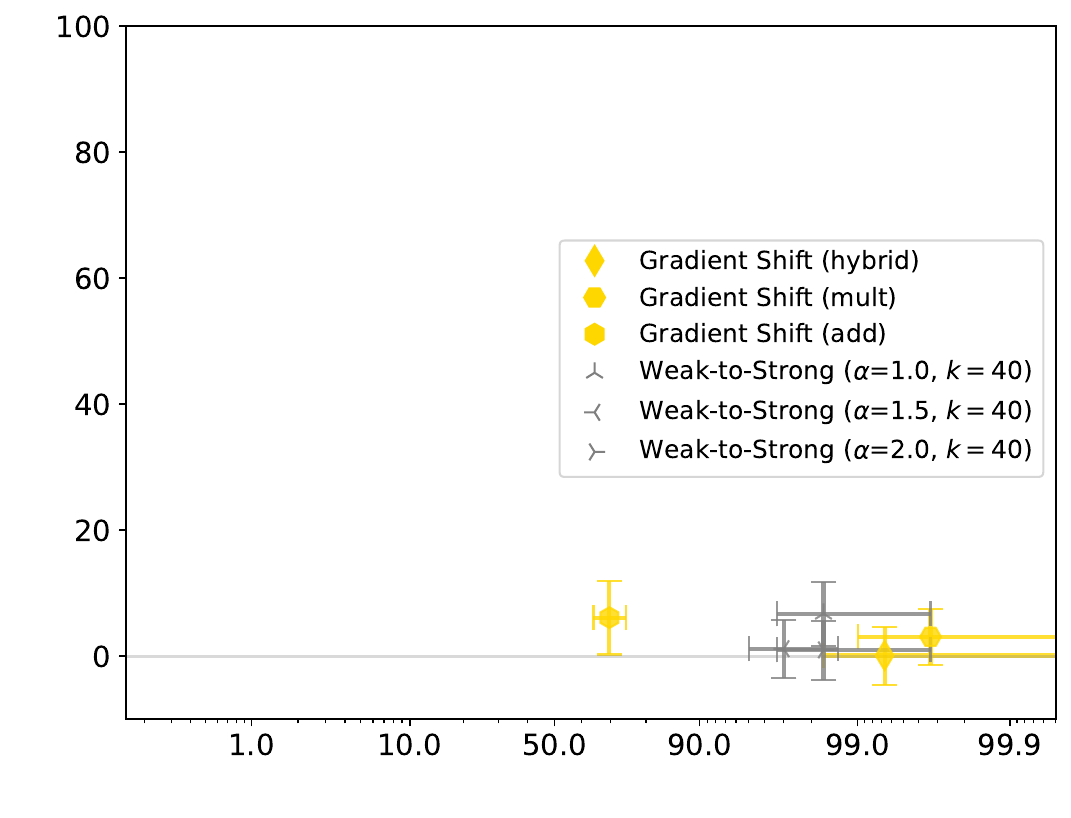}};
  \end{tikzpicture}

  \captionof{figure}{\textbf{Jailbreak success rate and jailbreak tax on GSM8K}: We overlay our evaluation results for Gradient Shift and Weak-to-Strong on a plot reprinted from \citet{nikolic2025jailbreak}. Other methods are from the same reference under the identical evaluation protocol.}
  \label{fig:gsm8k}
\end{minipage}
\end{figure*}

\begin{table*}[!ht]
  \centering

  \setlength{\tabcolsep}{1pt}
  \renewcommand{\arraystretch}{0.92}
  \begin{small}\begin{tabular}{lccc||ccc||ccc||ccc}
    \toprule
    & \multicolumn{3}{c}{GSM8K} & \multicolumn{3}{c}{MATH (level 1)} & \multicolumn{3}{c}{MATH (level 3)} & \multicolumn{3}{c}{MATH (level 5)} \\
    \cmidrule(lr){2-4}\cmidrule(lr){5-7}\cmidrule(lr){8-10}\cmidrule(lr){11-13}
    Method
    & Correct$\uparrow$ & Success$\uparrow$ & JTax$\downarrow$
    & Correct$\uparrow$ & Success$\uparrow$ & JTax$\downarrow$
    & Correct$\uparrow$ & Success$\uparrow$ & JTax$\downarrow$
    & Correct$\uparrow$ & Success$\uparrow$ & JTax$\downarrow$ \\
    \midrule
    Gradient Shift (hybrid)
    & \textbf{0.88} & 0.99 & \textbf{0.00}
    & \textbf{0.79} & \textbf{0.97} & -0.16
    & 0.55 & 0.92 & -0.15
    & 0.19 & 0.80 & -0.08 \\
    Gradient Shift (mult)
    & 0.86 & \textbf{1.00} & 0.03
    & 0.73 & 0.96 & -0.09
    & \textbf{0.56} & \textbf{0.96} & -0.12
    & \textbf{0.26} & \textbf{0.84} & \textbf{-0.41} \\
    Gradient Shift (add)
    & 0.58 & 0.70 & 0.06
    & 0.57 & 0.74 & -0.10
    & 0.38 & 0.66 & -0.11
    & 0.11 & 0.51 & 0.02 \\
    Weak-to-Strong ($\alpha$=1.0, $k$=40)
    & 0.81 & 0.98 & 0.07
    & 0.70 & 0.89 & -0.12
    & 0.51 & 0.80 & \textbf{-0.23}
    & 0.13 & 0.66 & 0.10 \\
    Weak-to-Strong ($\alpha$=1.5, $k$=40)
    & 0.85 & 0.97 & 0.01
    & 0.69 & 0.90 & -0.10
    & 0.50 & 0.80 & -0.20
    & 0.15 & 0.68 & -0.00 \\
    Weak-to-Strong ($\alpha$=2.0, $k$=40)
    & 0.86 & 0.98 & 0.01
    & 0.72 & 0.88 & \textbf{-0.17}
    & 0.46 & 0.77 & -0.15
    & 0.11 & 0.74 & 0.32 \\
    \midrule
    Benchmark LLM ($\mathrm{BenchAcc}$)
    & 0.89 & N/A & N/A
    & 0.70 & N/A & N/A
    & 0.52 & N/A & N/A
    & 0.22 & N/A & N/A \\
    Helper LLM
    & 0.85 & N/A & N/A
    & 0.61 & N/A & N/A
    & 0.37 & N/A & N/A
    & 0.10 & N/A & N/A \\
    \bottomrule
  \end{tabular}\end{small}
  \caption{\textbf{Math jailbreaking results on GSM8K and MATH}: $\mathrm{JTax}=1-\frac{\mathrm{TrueAcc}}{\mathrm{BenchAcc}}$, where higher values indicate less recovered utility.}
  \label{tab:math}
\end{table*}

\section{Experimental Results}

\paragraph{Hybrid Gradient Shift breaks hardened safety defenses.}

\Cref{fig:redteam_main} summarizes red-teaming results on \textsf{Llama-3.3-70B-Instruct} (LLM) and the safety-hardened \textsf{gpt-oss-120b} (LRM), comparing Hybrid Gradient Shift with other gradient-shift variants and the Weak-to-Strong baseline using ASR and harmfulness level.
The key finding is that Hybrid Gradient Shift is the only approach that can effectively jailbreak \textsf{gpt-oss-120b}: its ASR reaches the helper-model ASR, while all other methods remain below 30\%.
On \textsf{Llama-3.3-70B-Instruct}, Hybrid Gradient Shift remains strong and generally competitive across both metrics.
Together, these results show that Hybrid Gradient Shift is crucial for overcoming the hardened defenses of \textsf{gpt-oss-120b} without sacrificing performance on standard LLM targets.

\paragraph{Gradient Shift achieves substantial math success rate.}
\Cref{tab:math} shows that Gradient Shift variants attain consistently high non-refusal rates on the refuse-math target across GSM8K and MATH.
Hybrid and multiplicative Gradient Shift are near-perfect on GSM8K and stay above 92\% on MATH levels 1 and 3, with a moderate drop at level 5, since harder problems require longer solutions that more often hit the \texttt{max\_tokens} limit and get truncated, thus not counting as success.
By contrast, additive Gradient Shift is markedly less reliable, and Weak-to-Strong is consistently lower than Hybrid/multiplicative, especially on harder MATH settings.
Overall, Gradient Shift robustly recovers answering behavior under refusal-style alignment.

\paragraph{Gradient Shift incurs no jailbreak tax.}
Beyond eliciting answers, Gradient Shift preserves the underlying math capability of the model. As reported in \Cref{tab:math}, Hybrid matches the benchmark accuracy on both GSM8K and MATH while incurring zero jailbreak tax\footnote{The negative taxes on harder MATH problems are a selection effect: harder problems require longer reasoning and are more likely to be refused, so conditioning on successful jailbreaks filters out disproportionately difficult instances and increases $\mathrm{TrueAcc}$.}.
Notably, this utility is not explained by outputting the helper: the helper's standalone accuracy is substantially lower than both Gradient Shift and the benchmark model.
\Cref{fig:gsm8k} places our GSM8K results from \Cref{tab:math} into the broader jailbreak-tax comparison of \citet{nikolic2025jailbreak}, showing that Gradient Shift attains near-saturated success at zero jailbreak tax, approaching the performance of directly fine-tuning the target model.

\section{Discussion}

\paragraph{Limitations}
Our approach requires a weak, unaligned model that shares the target's tokenizer. This requirement is often feasible in practice since small models are inexpensive to fine-tune. While the strict calibration assumptions are idealized, our relaxation analysis and empirical results suggest they are not required for practical success. The strong performance of our methods suggests that standard instruction-tuned models are ``calibrated enough'' to serve as proxies in the gradient space, effectively acting as noisy but informative estimators of the pre-aligned distribution.

\paragraph{Broader Impact}
While we study jailbreaking, the theoretical framework is more general: it provides a way to shift a model's behavior using another model as a reference. For example, the same idea can be used in the opposite direction for jailbreak defense, by shifting an unsafe model toward a safer distribution. It can also be used for transferring skills between two models or combining strengths, e.g., the reasoning ability of a large model with the domain knowledge of a small specialist model. Among these possible uses, jailbreaking is the easiest downstream application, because it is the most tolerant of prediction noise: success is often detectable even when the predictor is imperfect.

\section{Impact Statement}

This work studies inference-time jailbreaking to better understand and measure vulnerabilities in current LLM safety mechanisms. Although such techniques could be misused, we present them to support defensive research: stronger threat models help developers identify failure modes and improve guardrails. More broadly, our theoretical framework is not limited to jailbreaking and can also support defensive use cases. As LLMs become more capable and are deployed more widely, understanding how safety mechanisms can fail, and how they may be exploited, becomes increasingly important. We encourage responsible disclosure and use, and hope these findings will help drive progress toward robust and scalable security defenses.

\bibliography{ref}
\bibliographystyle{icml2026}

\newpage
\onecolumn
\appendix

\section{More Examples for Gradient Shift}\label{sec:power}

\textbf{Power Loss.}
For the family of divergences generated by $G(\pvec) = \frac{1}{\beta(\beta-1)} \sum p_i^\beta$ where $\beta \in (1, 2]$, the dual space transforms probabilities by the power $\beta-1$. The update becomes a \textbf{power-space shift}: for all $\state$, $\pNew(\state)$ is
\begin{equation*}\label{eq:power-shift}
    \left( \pStrong(\state)^{\beta-1} + \pWeak(\state)^{\beta-1} - \pPred(\state)^{\beta-1}-\tau \right)_+^{\frac{1}{\beta-1}},
\end{equation*}
where $x_+=\max(0,x)$ and $\tau$ is the unique threshold such that $\sum_y\pNew(y)=1$. This family provides a continuum between the extremes: as $\beta \to 1$, the update approaches the multiplicative geometric mixture (Cross Entropy Loss), and at $\beta = 2$, it recovers the additive arithmetic shift (Quadratic Loss).

\section{Omitted Proofs}

\newcommand{\gap}{\nabla}

\thmmain*

\begin{proof}[Proof of \Cref{thm:main}]

We shorthand $\stateRaw$ by $\state$, $\pStrong$ by $\pvec$, $\pPred$ by $\bar{\pvec}$, and $\pWeak$ by $\bar{\statevec}$. Since $\pWeak$ and $\pPred$ are fixed parameters, we drop them from the input of $f$ for clarity. 

We analyze the minimax game between a predictor (choosing $f$) and an adversary (choosing distribution $\dist$) subject to moment constraints $\E[P]=\bar{\pvec}$ and $\E[\statervvec]=\bar{\statevec}$. We seek the value $V^*$:
\begin{align} \label{eq:obj_divergence}
    V^* = \sup_f \inf_{\substack{\dist(\prvvec,\statervvec) \\ \E[\prvvec]=\bar{\pvec}, \, \E[\statervvec]=\bar{\statevec}}} 
    \E_{\dist}[\breg(\statervvec, \prvvec) - \breg(\statervvec, f(\prvvec))]
\end{align}

\noindent \textbf{Step 1: Upper Bound.} 
We first determine the maximum possible value the predictor can hope for by analyzing the game where the adversary plays first. By the minimax inequality (swapping the order of sup and inf):
\begin{align}
    V^* \leq \inf_{\dist} \sup_f \E_{\dist}[\breg(\statervvec, \prvvec) - \breg(\statervvec, f(\prvvec))]
\end{align}
For any fixed $\dist$, the optimal predictor $f$ is the conditional expectation $f(\prvvec) = \E[\statervvec|\prvvec]$. Substituting this back into the objective yields the intrinsic uncertainty:
\begin{align}
    \E_{\dist}[\breg(\E[\statervvec|\prvvec], \prvvec)]
\end{align}
The adversary minimizes this quantity. Since $\breg$ is jointly convex, by Jensen's inequality, the minimum is achieved when the random variables are replaced by their means (i.e., a deterministic distribution where $\prvvec=\bar{\pvec}$ and $\statervvec=\bar{\statevec}$):
\begin{align}
    \inf_{\dist}\E_{\dist}[\breg(\E[\statervvec|\prvvec], \prvvec)] = \breg(\bar{\statevec}, \bar{\pvec})
\end{align}
Thus, we establish the upper bound:
\begin{align} \label{eq:upper_bound}
    V^* \leq \breg(\bar{\statevec}, \bar{\pvec})
\end{align}

\noindent \textbf{Step 2: Achievability.}
We now show the predictor can guarantee this value by choosing a specific function. Let $\gap = \gradG(\bar{\statevec}) - \gradG(\bar{\pvec})$. We define the predictor $f$ such that:
\begin{align} \label{eq:link_function}
    \gradG(f(\pvec)) = \gradG(\pvec) + \gap
\end{align}
Using the three-point identity for Bregman divergences, the objective expands to:
\begin{align}
    \breg(\statervvec, \prvvec) - \breg(\statervvec, f(\prvvec)) = \breg(f(\prvvec), \prvvec) + \langle \gradG(f(\prvvec)) - \gradG(\prvvec), Y - f(\prvvec) \rangle
\end{align}
Due to the definition of $f$, the gradient difference becomes the constant vector $\gap$. Taking the expectation over any valid distribution $\dist$:
\begin{align}
    \E[\dots] &= \E\left[ \breg(f(\prvvec), \prvvec) + \langle \gap, Y - f(\prvvec) \rangle \right] \nonumber \\
    &= \E\left[ \breg(f(\prvvec), \prvvec) - \langle \gap, f(\prvvec) \rangle \right] + \langle \gap, \bar{\statevec} \rangle \label{eq:expectation_step}
\end{align}
Consider the function $h(\mathbf{z}) = \breg(\mathbf{z}, \bar{\pvec}) - \langle \gap, \mathbf{z} \rangle=\breg(\mathbf{z}, \bar{\pvec}) - \langle \gradG(\bar{\statevec}) - \gradG(\bar{\pvec}), \mathbf{z} \rangle$. This function is convex and minimized at $\mathbf{z}=\bar{\statevec}$. Therefore, by Jensen's inequality applied to the term inside the expectation in \eqref{eq:expectation_step}:
\begin{align}
    \E[\dots] & \geq  \breg(\E[f(\prvvec)], \bar{\pvec}) - \langle \gap, \E[f(\prvvec)] \rangle  + \langle \gap, \bar{\statevec} \rangle\nonumber \\
    &\geq h(\bar{\statevec}) + \langle \gap, \bar{\statevec} \rangle \nonumber \\
    &= (\breg(\bar{\statevec}, \bar{\pvec}) - \langle \gap, \bar{\statevec} \rangle) + \langle \gap, \bar{\statevec} \rangle \nonumber \\
    &= \breg(\bar{\statevec}, \bar{\pvec})
\end{align}
This confirms that the predictor guarantees at least $\breg(\bar{\statevec}, \bar{\pvec})$. Combined with Step 1, $V^* = \breg(\bar{\statevec}, \bar{\pvec})$.

\noindent \textbf{Step 3: Projection.}
The predictor $f(\pvec)$ defined in Step 2 may output values outside the probability simplex $\Delta$. Let $f_{proj}(\pvec) = \proj(f(\pvec))$ be the Bregman projection of $f(\pvec)$ onto $\Delta$. By the Generalized Pthagorean Theorem, for any $\mathbf{q} \in \Delta$:
\begin{align}
    \breg(\mathbf{q}, f(\pvec)) \geq \breg(\mathbf{q}, f_{proj}(\pvec)) + \breg(f_{proj}(\pvec), f(\pvec)) \geq \breg(\mathbf{q}, f_{proj}(\pvec))
\end{align}
Thus, the projected predictor $f_{proj}(\pvec)$ yields a lower (better) loss than the unprojected $f(\pvec)$, ensuring the strategy remains optimal and valid within the simplex. $f_{proj}(\pvec)$ is precisely the gradient shift rule $f^*$. Thus, the gradient shift rule is optimal.
\end{proof}

\cormain*

\begin{proof}[Proof of \Cref{coro}]

The optimality of additive, multiplicative and power-space shift form follows directly from the main theorem. It remains to prove the optimality of the hybrid form in the binary case. 

We have shown that \begin{align}
    \sup_f \inf_{\substack{\dist(\pvec,Y)  \\ \E_{\dist}[\pvec]=\bar{\pvec}, \, \E_{\dist}[\statervvec]=\bar{\statevec}}} 
    \E_{\dist}[-\log(\pvec_\staterv) + \log(f(\pvec)_\staterv)]=\kl(\bar{\mathbf{y}},\bar{\pvec}).
\end{align} 

Thus we only need to prove for 
\begin{equation}
    \label{opt:1}
    f(\pvec)_s = 
    \begin{cases} 
        \frac{\pvec_s \cdot \bar{\mathbf{y}}_s}{\bar{\pvec}_s} & \text{if } \bar{\mathbf{y}}_s < \bar{\pvec}_s, \\
        \pvec_s + \epsilon \left( \bar{\mathbf{y}}_s - \bar{\pvec}_s \right) & \text{else},
    \end{cases}
\end{equation} where $\epsilon$ is a normalization constant,

\begin{align}
    \inf_{\substack{\dist(\pvec,Y)  \\ \E_{\dist}[\pvec]=\bar{\pvec}, \, \E_{\dist}[\statervvec]=\bar{\statevec}}} 
    \E_{\dist}[-\log(\pvec_\staterv) + \log(f(\pvec)_\staterv)]\ge \kl(\bar{\mathbf{y}},\bar{\pvec}).
\end{align} 

First we prove two lemmas.

\begin{lemma}
$f(\bar{\pvec})=\bar{\mathbf{y}}$.
\end{lemma}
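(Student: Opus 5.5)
The plan is to evaluate the hybrid aggregator \eqref{opt:1} at $\pvec=\bar{\pvec}$ coordinate by coordinate, and then show that the normalization constant is forced to be $\epsilon=1$. Partition the index set into $S_-=\{s:\bar{\statevec}_s<\bar{\pvec}_s\}$ and $S_+=\{s:\bar{\statevec}_s\ge\bar{\pvec}_s\}$. On $S_-$, the multiplicative branch gives $f(\bar{\pvec})_s=\bar{\pvec}_s\bar{\statevec}_s/\bar{\pvec}_s=\bar{\statevec}_s$ directly, with no dependence on $\epsilon$. On $S_+$, the additive branch gives $f(\bar{\pvec})_s=\bar{\pvec}_s+\epsilon(\bar{\statevec}_s-\bar{\pvec}_s)$.

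Next I determine $\epsilon$ from the normalization $\sum_s f(\bar{\pvec})_s=1$. Write this as
\[
\sum_{s\in S_-}\bar{\statevec}_s+\sum_{s\in S_+}\bar{\pvec}_s+\epsilon\sum_{s\in S_+}(\bar{\statevec}_s-\bar{\pvec}_s)=1.
\]
Since both $\bar{\statevec}$ and $\bar{\pvec}$ lie in $\Delta$, we have $\sum_{s\in S_+}(\bar{\statevec}_s-\bar{\pvec}_s)=\sum_{s\in S_-}(\bar{\pvec}_s-\bar{\statevec}_s)=:m\ge 0$. Substituting, the normalization condition reads $1-m+\epsilon m=1$, i.e.\ $(\epsilon-1)m=0$.

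There are now two cases. If $m>0$, then $\epsilon=1$, and every coordinate on $S_+$ equals $\bar{\statevec}_s$. If $m=0$, then every summand in $m$ is nonnegative and sums to zero, so each term on $S_+$ vanishes. Hence $\bar{\statevec}_s=\bar{\pvec}_s$ on $S_+$, and $S_-$ is empty. In that case $f(\bar{\pvec})=\bar{\pvec}=\bar{\statevec}$ for any $\epsilon$. In both cases $f(\bar{\pvec})=\bar{\statevec}$.

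The only delicate point is that $\epsilon$ is defined implicitly by normalization for each input $\pvec$, so one must check that it is well defined at $\bar{\pvec}$. The argument above settles this: $\epsilon$ is uniquely $1$ when $m>0$ and irrelevant when $m=0$. Nothing here uses the binary restriction; that restriction only matters for the later convexity lemma.
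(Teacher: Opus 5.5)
Your proof is correct and follows essentially the same route as the paper. Both read off $f(\bar{\pvec})_s=\bar{\statevec}_s$ on the suppression set, then use normalization to get $(\epsilon-1)\sum_{s:\bar{\statevec}_s\ge\bar{\pvec}_s}(\bar{\statevec}_s-\bar{\pvec}_s)=0$ and hence $\epsilon=1$; your separate handling of the degenerate case $m=0$ is a small improvement over the paper, which simply asserts $\epsilon=1$ even though $\epsilon$ is not forced there (the conclusion still holds because $\bar{\statevec}=\bar{\pvec}$).
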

\begin{proof}
    For $\bar{\mathbf{y}}_s < \bar{\pvec}_s$,
    \begin{align*}
    f(\bar{\pvec})=&\bar{\pvec}_s\cdot\bar{\mathbf{y}}_s/\bar{\pvec}_s\\
        =&\bar{\mathbf{y}}_s
    \end{align*}

    By summing up all $s$ in \cref{opt:1}, we have:

    $$1=\sum_{s:\bar{\mathbf{y}}_s < \bar{\pvec}_s}\bar{\mathbf{y}}_s+\sum_{s:\bar{\mathbf{y}}_s \geq \bar{\pvec}_s}(\epsilon(\bar{\mathbf{y}}_s-\bar{\pvec}_s)+\bar{\pvec}_s).$$

    Since $\sum_s \bar{\mathbf{y}}_s=1$, comparing with the above normalization implies $(\epsilon-1)\sum_{s:\bar{\mathbf{y}}_s\ge \bar{\pvec}_s}(\bar{\mathbf{y}}_s-\bar{\pvec}_s)=0$, hence $\epsilon=1$. By inserting $\epsilon$ we obtain that for $\bar{\mathbf{y}}_s \ge \bar{\pvec}_s$,
    \begin{align*}
    f(\bar{\pvec}_s)=\bar{\pvec}_s+\epsilon(\bar{\mathbf{y}}_s-\bar{\pvec}_s)=\bar{\mathbf{y}}_s
    \end{align*}

\end{proof}

\begin{lemma}
    For any $x,A,B>0$, let $h(x)=\log((Ax+B)/x)$, then $h(x)$ is convex.
\end{lemma}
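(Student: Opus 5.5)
We need to prove that $h(x)=\log\big((Ax+B)/x\big)$ is convex on $x>0$ for any $A,B>0$.

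The plan is to rewrite $h$ as a difference of logarithms and check the sign of the second derivative. We have $h(x)=\log(Ax+B)-\log x$, which is well defined and smooth on $x>0$ because $Ax+B>0$ and $x>0$.

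Differentiating twice gives
\begin{equation*}
h'(x)=\frac{A}{Ax+B}-\frac{1}{x},\qquad h''(x)=\frac{1}{x^2}-\frac{A^2}{(Ax+B)^2}.
\end{equation*}
Convexity then reduces to showing $(Ax+B)^2\ge A^2x^2$. This holds because $Ax+B> Ax>0$ when $B>0$, so in fact $h''(x)>0$ and $h$ is strictly convex on $(0,\infty)$.

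There is no substantive obstacle here. The only points to watch are positivity, so that the logarithms and the square-root comparison are valid, and the fact that the domain $(0,\infty)$ is an interval, so that a nonnegative second derivative implies convexity.

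As an alternative, $h(x)=\log(A+B/x)$ can be viewed as $\log$ of $A+Bu$ with $u=1/x$. This route is less direct, because $\log$ is concave, so it does not give convexity immediately; I would stick with the second-derivative computation.
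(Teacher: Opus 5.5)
Your proof is correct and takes the same approach as the paper, which also computes $h''$ directly. The paper writes the result as the single fraction $h''(x)=(2ABx+B^2)/(x(Ax+B))^2\ge 0$; this is exactly your $1/x^2 - A^2/(Ax+B)^2$ put over a common denominator.
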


\begin{proof}
    By direct calculation,

    \begin{align*}
    h''(x) = \frac{2ABx+B^2}{(x(Ax+B))^2}\ge0.
    \end{align*}

    Thus, $h(x)$ is convex.
\end{proof}

Then using these lemmas, 

\begin{align*}
    &\E_{\dist}[-\log(\pvec_\staterv) + \log(f(\pvec)_\staterv)]\\
    =&\E_{\dist}\sum_\state \Pr(Y=\state)\log\left(\frac{f(\pvec)_\state}{\pvec_\state}\right)\\
    =&\sum_{y:\bar{\mathbf{y}}_\state<\bar{\pvec}_\state}\bar{\mathbf{y}}_\state\log\left(\frac{\bar{\mathbf{y}}_\state}{\bar{\pvec}_\state}\right)+\E_{\dist}\sum_{y:\bar{\mathbf{y}}_\state\ge\bar{\pvec}_\state}\Pr(Y=\state)\log\left(\frac{f(\pvec)_\state}{\pvec_\state}\right)\\
    =&\sum_{y:\bar{\mathbf{y}}_\state<\bar{\pvec}_\state}\bar{\mathbf{y}}_\state\log\left(\frac{\bar{\mathbf{y}}_\state}{\bar{\pvec}_\state}\right)+\E_{\dist}\sum_{y:\bar{\mathbf{y}}_\state\ge\bar{\pvec}_\state}\Pr(Y=\state)\log\left(\frac{\pvec_y + \epsilon_\pvec \left( \bar{\mathbf{y}}_y - \bar{\pvec}_y \right)}{\pvec_\state}\right)\\
    \ge&\sum_{y:\bar{\mathbf{y}}_\state<\bar{\pvec}_\state}\bar{\mathbf{y}}_\state\log\left(\frac{\bar{\mathbf{y}}_\state}{\bar{\pvec}_\state}\right)+\sum_{y:\bar{\mathbf{y}}_\state\ge\bar{\pvec}_\state}\E_{\dist}[Y_\state]\log\left(\frac{f(\E_{\dist}[\pvec])_\state}{\E_{\dist}[\pvec_\state]}\right)\\
    =&\sum_{y:\bar{\mathbf{y}}_\state<\bar{\pvec}_\state}\bar{\mathbf{y}}_\state\log\left(\frac{\bar{\mathbf{y}}_\state}{\bar{\pvec}_\state}\right)+\sum_{y:\bar{\mathbf{y}}_\state\ge\bar{\pvec}_\state}\bar{\mathbf{y}}_\state\log\left(\frac{\bar{\mathbf{y}}_\state}{\bar{\pvec}_\state}\right)\\
    =& \kl(\bar{\mathbf{y}},\bar{\pvec})\\
\end{align*}

As $\sum_s \pvec_s=1, \sum_s \bar{\pvec}_s=1$ and $\sum_s \bar{\mathbf{y}}_y=1$,

\begin{align*}
    &\pvec_y + \epsilon_\pvec \left( \bar{\mathbf{y}}_y - \bar{\pvec}_y \right)\\
    =&\pvec_y+\frac{(\bar{\mathbf{y}}_y - \bar{\pvec}_y)(1-\pvec_y)}{1-\bar{\pvec}_y}\\
    \coloneqq& A\pvec_y+B
\end{align*}

where $A,B>0$ are constants.

Thus, by the convexity lemma, we can apply Jensen's inequality in $\pvec_y$. This yields the desired lower bound and completes the proof of the optimality.

\end{proof}

\subsection{Performance of Gradient Shift under Relaxed Assumptions}\label{sec:relax}

In this section, we analyze the performance of the gradient shift rule when the assumptions are relaxed. Formally, we will still apply the rule: \begin{align*} \nonumber
\pNew=&f^*(\pStrong,\pWeak,\pPred)\\ 
=&  \proj \left( \gradGinv \big( \gradG(\pStrong) + \gradG(\pWeak) - \gradG(\pPred) \big) \right).
\end{align*} However, $\pPred$ may not be $\E[\pStrongrv\mid \pWeakrv=\pWeak]$ and $\pWeak$ may not be $\E[\statervRawvec\mid \pWeakrv=\pWeak]$.

We will show that in this case, the expected improvement will have an additional residual error term depending on how close $\pPred$, $\pWeak$ are to the true conditional expectations.

\begin{restatable}{theorem}{thmrelaxed}
\label{thm:relaxed}
Let $\score$ be a strictly proper loss induced by $G$ satisfying the regularity conditions. Conditioning on any fixed realization of the helper $\pWeakrv=\pWeak$ and predictor $\pPredrv=\pPred$, let the true moments be $\bar{\pvec} = \E[\pStrongrv\mid \pWeakrv=\pWeak]$ and $\bar{\statevec} = \E[\statervRawvec\mid \pWeakrv=\pWeak]$.  Then, the expected improvement satisfies:
\begin{equation*}
    \E \left[ \score(\pStrongrv, \statervRaw) - \score(\hat{\pNewrv}, \statervRaw) \right] \geq \breg(\bar{\statevec}, \bar{\pvec}) - \breg(\bar{\statevec}, \bar{\statevec}^{out})
\end{equation*}
where $\bar{\statevec}^{out}$ is the shift of the true mean $\bar{\statevec}$, such that $\gradG(\bar{\statevec}^{out}) = \gradG(\bar{\pvec}) + \gap$, where $\gap = \gradG(\pWeak) - \gradG(\pPred)$. The residual $\breg(\bar{\statevec}, \bar{\statevec}^{out})$ vanishes as $(\pPred, \pWeak) \to (\bar{\pvec},\bar{\statevec})$. 
\end{restatable}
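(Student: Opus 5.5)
I will follow the achievability argument (Step 2) in the proof of \Cref{thm:main}, but now with the misspecified shift vector $\gap = \gradG(\pWeak) - \gradG(\pPred)$ in place of the ideal one. Fix the realization $(\pWeak,\pPred)$. Define the unprojected rule $f$ by $\gradG(f(\pvec)) = \gradG(\pvec) + \gap$; this is well defined by the unbounded-dual-space condition. It suffices to lower bound the gain of $f$, since the Generalized Pythagorean argument of Step 3 shows that projecting onto $\Delta$ can only decrease $\breg(\statervvec,\cdot)$ for $\statervvec\in\Delta$. So the projected rule $\hat{\pNewrv}$ inherits any lower bound proved for $f$.

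The first step is the three-point identity, applied exactly as before:
\[
\breg(\statervvec,\prvvec)-\breg(\statervvec,f(\prvvec)) = \breg(f(\prvvec),\prvvec) + \langle \gap, \statervvec - f(\prvvec)\rangle .
\]
Taking conditional expectations and using $\E[\statervvec]=\bar{\statevec}$ gives
\[
\E\bigl[\breg(f(\prvvec),\prvvec) - \langle\gap,f(\prvvec)\rangle\bigr] + \langle \gap,\bar{\statevec}\rangle .
\]
By joint convexity of $\breg$ and Jensen, the expectation is at least $\breg(\mathbf{z},\bar{\pvec}) - \langle\gap,\mathbf{z}\rangle$ with $\mathbf{z}=\E[f(\prvvec)]$, using $\E[\prvvec]=\bar{\pvec}$. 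The difference from the ideal case is that $h(\mathbf{z}) = \breg(\mathbf{z},\bar{\pvec}) - \langle\gap,\mathbf{z}\rangle$ now has $\nabla h(\mathbf{z}) = \gradG(\mathbf{z}) - \gradG(\bar{\pvec}) - \gap$. Its minimizer is therefore $\bar{\statevec}^{out}$, defined by $\gradG(\bar{\statevec}^{out})=\gradG(\bar{\pvec})+\gap$, rather than $\bar{\statevec}$.

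The second step bounds this minimum and compares it with the target. We have $h(\mathbf{z})\ge h(\bar{\statevec}^{out})$, so the gain is at least $h(\bar{\statevec}^{out}) + \langle\gap,\bar{\statevec}\rangle$. This equals $\breg(\bar{\statevec}^{out},\bar{\pvec}) + \langle\gap,\bar{\statevec}-\bar{\statevec}^{out}\rangle$. Applying the three-point identity to the triple $(\bar{\statevec},\bar{\statevec}^{out},\bar{\pvec})$ gives
\[
\breg(\bar{\statevec},\bar{\pvec}) - \breg(\bar{\statevec},\bar{\statevec}^{out}) = \breg(\bar{\statevec}^{out},\bar{\pvec}) + \langle \gradG(\bar{\statevec}^{out})-\gradG(\bar{\pvec}), \bar{\statevec}-\bar{\statevec}^{out}\rangle .
\]
The inner product here is exactly $\langle\gap,\bar{\statevec}-\bar{\statevec}^{out}\rangle$, so the lower bound equals $\breg(\bar{\statevec},\bar{\pvec}) - \breg(\bar{\statevec},\bar{\statevec}^{out})$, which is the claimed inequality. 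One bookkeeping point is that $\bar{\statevec}^{out}$ may lie outside $\Delta$. This is harmless: $h$ is convex on $\Omega$, its unconstrained minimum over $\Omega$ still bounds $h(\mathbf{z})$ from below, and $\breg(\bar{\statevec},\bar{\statevec}^{out})$ is well defined on $\Omega$.

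The final step is the vanishing of the residual. As $(\pPred,\pWeak)\to(\bar{\pvec},\bar{\statevec})$, continuity of $\gradG$ gives $\gap\to\gradG(\bar{\statevec})-\gradG(\bar{\pvec})$. Continuity of $\gradGinv$, which holds since $G$ is strictly convex, differentiable, and has surjective gradient, then gives $\bar{\statevec}^{out}\to\bar{\statevec}$, and hence $\breg(\bar{\statevec},\bar{\statevec}^{out})\to 0$ by continuity of $G$ and $\gradG$.

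I expect the main obstacle to be regularity rather than algebra. This includes justifying continuity of $\gradGinv$, which would need $G$ to be of Legendre type, and handling the boundary of the simplex for cross entropy, where $\gradG$ blows up. I would state these as mild implicit assumptions, or restrict to the interior of the simplex.
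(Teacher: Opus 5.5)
Your proposal is correct and follows the paper's proof essentially step for step. Both arguments use the three-point identity with the constant dual shift $\nabla G(\pWeak)-\nabla G(\pPred)$, Jensen via joint convexity of $\breg$, minimization of $\breg(\mathbf{z},\bar{\pvec})-\langle \nabla G(\pWeak)-\nabla G(\pPred),\mathbf{z}\rangle$ at $\bar{\statevec}^{out}$, the three-point identity on $(\bar{\statevec},\bar{\statevec}^{out},\bar{\pvec})$, and the generalized Pythagorean inequality for the projection. Your explicit continuity argument for the vanishing residual fills in a step the paper only asserts, and it needs no Legendre-type assumption: $\nabla G$ is a continuous injection from the open set $\Omega$ onto $\mathbb{R}^d$, so invariance of domain already makes $(\nabla G)^{-1}$ continuous.
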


\begin{proof}[Proof of \Cref{thm:relaxed}]

We shorthand $\stateRaw$ by $\state$, $\pStrong$ by $\pvec$, $\pPred$ by $\hat{\bar{\pvec}}$, and $\pWeak$ by $\hat{\bar{\statevec}}$. Since $\pWeak$ and $\pPred$ are fixed parameters, we drop them from the input of $f$ for clarity. 

With the above shorthand, $\gap$ becomes $\gradG(\hat{\bar{\statevec}}) - \gradG(\hat{\bar{\pvec}})$.

We analyze the performance of the gradient shift rule under the true moment constraints $\E[\prvvec]=\bar{\pvec}$ and $\E[\statervvec]=\bar{\statevec}$.

Using the three-point identity for Bregman divergences, the expected improvement is:
\begin{align*}
    \E[\breg(\statervvec, \prvvec) - \breg(\statervvec, {f}(\prvvec))] &= \E[\breg({f}(\prvvec), \prvvec) + \langle \gradG({f}(\prvvec)) - \gradG(\prvvec), \statervvec - {f}(\prvvec) \rangle]
\end{align*}

We define the predictor ${f}$ such that:
\begin{align} \label{eq:link_function2}
    \gradG({f}(\pvec)) = \gradG(\pvec) + {\gap}
\end{align}
The expression becomes:
\begin{align} \label{eq:relaxed_expectation_final}
\E\big[\breg(\statervvec,\prvvec)-\breg(\statervvec,f(\prvvec))\big]
&= \E\left[ \breg(f(\prvvec), \prvvec) - \langle {\gap}, f(\prvvec) \rangle \right]
  + \langle {\gap}, \bar{\statevec} \rangle .
\end{align}

Define the bivariate function
\[
\tilde h(\mathbf z,\mathbf q):=\breg(\mathbf z,\mathbf q)-\langle{\gap},\mathbf z\rangle .
\]
Since $\breg(\cdot,\cdot)$ is jointly convex, $\tilde h$ is jointly convex in $(\mathbf z,\mathbf q)$, hence by Jensen,
\begin{align*}
\E\left[\breg({f}(\prvvec),\prvvec)-\langle{\gap},{f}(\prvvec)\rangle\right]
&=\E\left[\tilde h({f}(\prvvec),\prvvec)\right]\\
&\ge \tilde h\left(\E[{f}(\prvvec)],\,\E[\prvvec]\right)\\
&= \breg\left(\E[{f}(\prvvec)],\bar{\pvec}\right)-\langle{\gap},\E[{f}(\prvvec)]\rangle .
\end{align*}
Now consider the (univariate) convex function $h(\mathbf z):=\breg(\mathbf z,\bar{\pvec})-\langle{\gap},\mathbf z\rangle$.

Combining \eqref{eq:relaxed_expectation_final} with the above inequality, we have
\begin{align*}
& \E\big[\breg(\statervvec,\prvvec)-\breg(\statervvec,f(\prvvec))\big]\\
\ge & \Big(\breg\left(\E[{f}(\prvvec)],\bar{\pvec}\right)-\langle{\gap},\E[{f}(\prvvec)]\rangle\Big)+\langle{\gap},\bar{\statevec}\rangle \\
= & h(\E[{f}(\prvvec)])+\langle{\gap},\bar{\statevec}\rangle
\ge \min_{\mathbf z} h(\mathbf z)+\langle{\gap},\bar{\statevec}\rangle .
\end{align*}

To find the minimum of $h(\mathbf{z})$, we set its gradient to zero:
\begin{equation*}
    \nabla h(\mathbf{z}) = \gradG(\mathbf{z}) - \gradG(\bar{\pvec}) - {\gap} = 0 \implies \gradG(\mathbf{z}) = \gradG(\bar{\pvec}) + {\gap}
\end{equation*}
This minimizer is exactly $\bar{\statevec}^{out}$. Substituting this back into the lower bound:
\begin{align} \label{eq:step2_result}
    \E\big[\breg(\statervvec,\prvvec)-\breg(\statervvec,f(\prvvec))\big] \geq \breg(\bar{\statevec}^{out}, \bar{\pvec}) - \langle {\gap}, \bar{\statevec}^{out} \rangle + \langle {\gap}, \bar{\statevec} \rangle = \breg(\bar{\statevec}^{out}, \bar{\pvec}) + \langle {\gap}, \bar{\statevec} - \bar{\statevec}^{out} \rangle
\end{align}

To relate \eqref{eq:step2_result} to the optimal value $\breg(\bar{\statevec}, \bar{\pvec})$, we apply the three-point identity centered at $(\bar{\statevec}, \bar{\pvec}, \bar{\statevec}^{out})$:
\begin{align*}
    \breg(\bar{\statevec}, \bar{\pvec}) &= \breg(\bar{\statevec}, \bar{\statevec}^{out}) + \breg(\bar{\statevec}^{out}, \bar{\pvec}) + \langle \gradG(\bar{\statevec}^{out}) - \gradG(\bar{\pvec}), \bar{\statevec} - \bar{\statevec}^{out} \rangle \\
    &= \breg(\bar{\statevec}, \bar{\statevec}^{out}) + \breg(\bar{\statevec}^{out}, \bar{\pvec}) + \langle {\gap}, \bar{\statevec} - \bar{\statevec}^{out} \rangle
\end{align*}
Rearranging this identity, we find:
\begin{equation*}
    \breg(\bar{\statevec}^{out}, \bar{\pvec}) + \langle {\gap}, \bar{\statevec} - \bar{\statevec}^{out} \rangle = \breg(\bar{\statevec}, \bar{\pvec}) - \breg(\bar{\statevec}, \bar{\statevec}^{out})
\end{equation*}
Substituting this into our bound from Step 2 confirms the improvement:
\begin{equation*}
    \E \left[ \score(\prvvec, \staterv) - \score({f}(\prvvec), \staterv) \right] \geq \breg(\bar{\statevec}, \bar{\pvec}) - \breg(\bar{\statevec}, \bar{\statevec}^{out})
\end{equation*}

By the Generalized Pythagorean Theorem, the Bregman projection ${f}_{proj}(\pvec) = \proj({f}(\pvec))$ onto the simplex $\Delta$ yields a lower (better) loss than the unprojected ones. Thus, the bound holds for the feasible predictor, with the residual $\breg(\bar{\statevec}, \bar{\statevec}^{out})$ vanishing as approximations become exact. ${f}_{proj}(\pvec)$ is precisely the gradient shift rule ${f}^*$. Thus, the bound holds for the gradient shift rule. 
\end{proof}
\section{Deferred Experimental Setup and Results}
\label{sec:expapp}

\subsection{Implementation Details}

\paragraph{Device.} All experiments were run on a single server equipped with one RTX Pro 6000 GPU and an Intel(R) Xeon(R) Platinum 8470Q CPU.

\paragraph{LLM Hyperparameters.} For all non-\textsf{gpt-oss} models, we perform inference with \textsf{bitsandbytes} 8-bit quantization; for \textsf{gpt-oss} models, we use the original \textsf{mxfp4} quantization format.
Unless otherwise stated, we use \texttt{temperature}=0.6 with no \texttt{top\_k} or \texttt{top\_p} truncation, \texttt{frequency\_penalty}=0, and \texttt{presence\_penalty}=0.
We set \texttt{max\_new\_tokens}=1000 for non-\textsf{gpt-oss} models and 2000 for \textsf{gpt-oss} models.

\paragraph{Prompt Template.} For red-teaming jailbreaking, we use the harmful question itself as the prompt (no prompt template).
For math jailbreaking, we use the standard prompt template from the model card of \textsf{ethz-spylab--Llama-3.1-70B-Instruct\_refuse\_math}.

\paragraph{Evaluation Metrics.} For evaluation, HarmBench ASR is computed using \textsf{HarmBench-Llama-2-13b-cls} without additional quantization.
StrongREJECT rubric scores are obtained with the hosted \textsf{gemini-3-flash-preview} model; if the judge model refuses (occurs in $<1\%$ of cases), we fall back to \textsf{devstral-2512}.

\subsection{Additional Evaluation Metrics}

In the red-teaming jailbreaking experiment, we report Attack Success Rate (ASR) using the HarmBench evaluator and harmfulness using the StrongREJECT rubric. To provide a broader view of response harmfulness, we additionally evaluate with two widely used automated judges: \textbf{MD-Judge}~\cite{Li2024SALADBenchAH} and \textbf{HarmScore}~\cite{chan2025speak}. \Cref{fig:redteam_app} complements the main results by showing that the improvements of Hybrid Gradient Shift over Weak-to-Strong persist under these alternative judges, for both a standard LLM and an LRM target.
\begin{figure*}[!ht]
  \centering
  \begin{subfigure}[t]{0.48\textwidth}
    \centering
    \includegraphics[width=\linewidth]{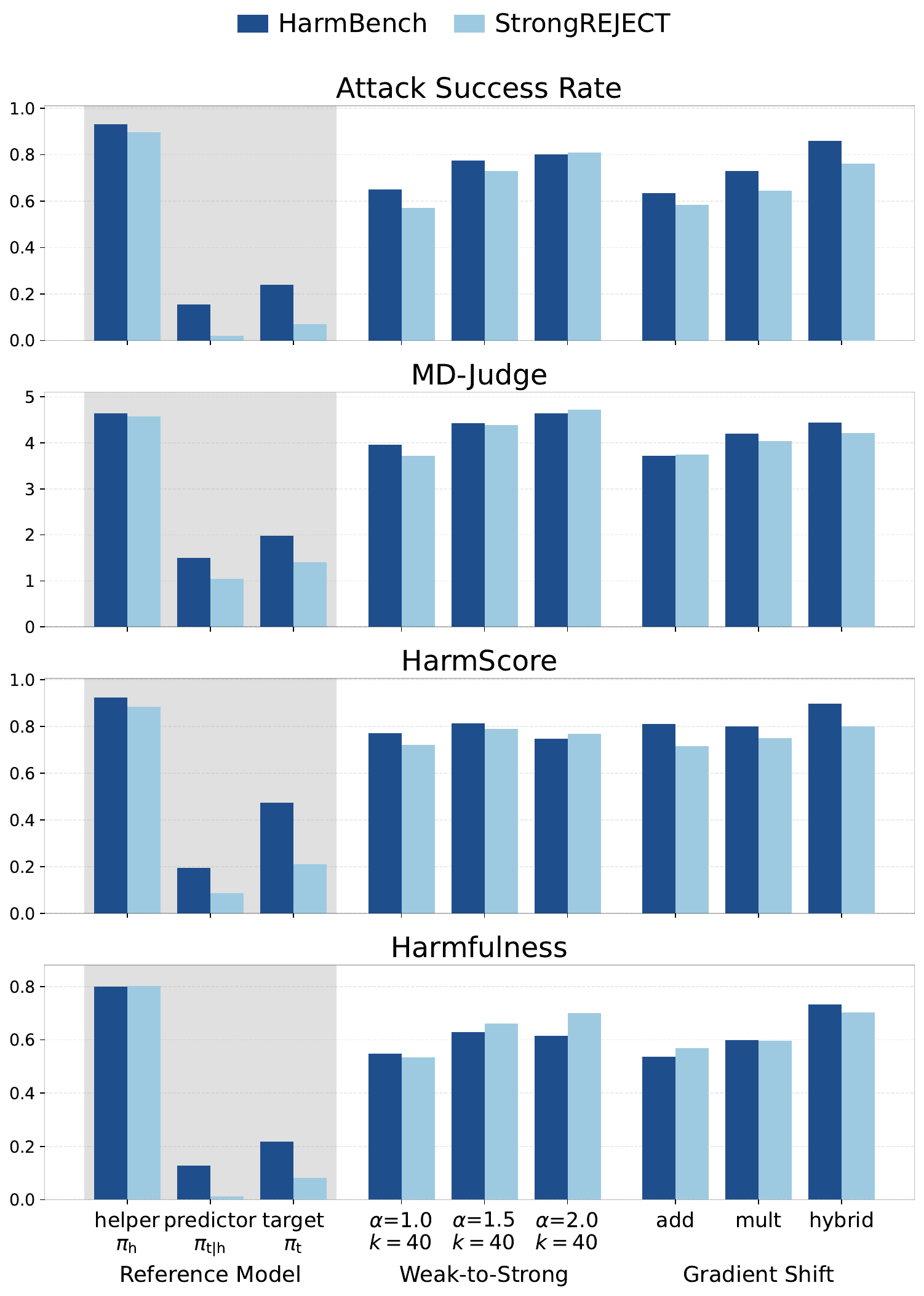}
    \caption{Jailbreaking \textsf{Llama-3.3-70B-Instruct} (standard LLM)}
  \end{subfigure}\hfill
  \begin{subfigure}[t]{0.48\textwidth}
    \centering
    \includegraphics[width=\linewidth]{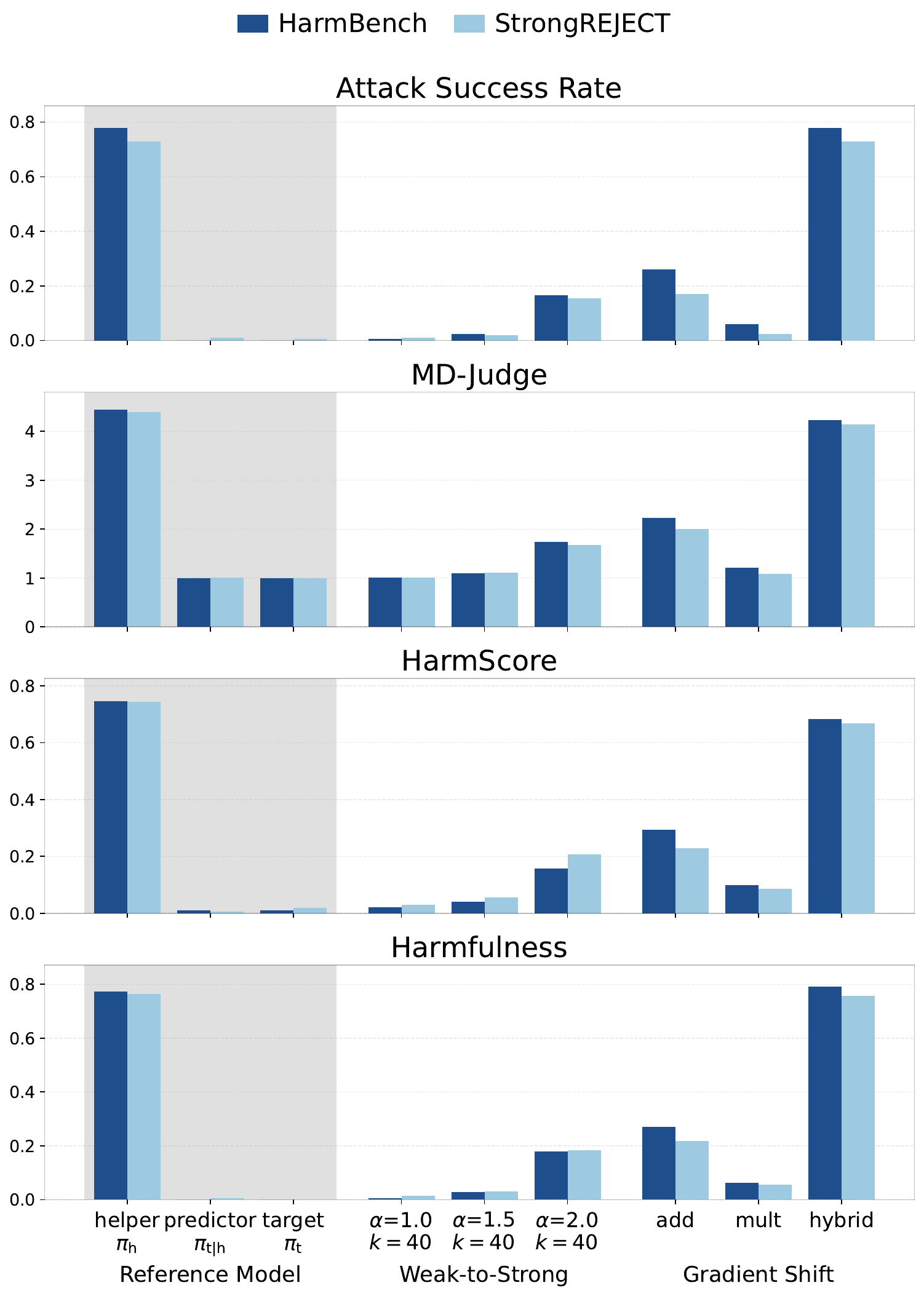}
    \caption{Jailbreaking \textsf{gpt-oss-120b} (LRM)}
  \end{subfigure}
  \caption{\textbf{Red-teaming jailbreaking LLM and LRM}: On both HarmBench and StrongREJECT datasets, our Hybrid Gradient Shift successfully jailbreaks both standard LLM and LRM, outperforming Weak-to-Strong on both ASR and harmfulness.}
  \label{fig:redteam_app}
\end{figure*}

\end{document}